\documentclass{article} 
\usepackage{iclr2026_conference,times}

\usepackage{amsmath,amsfonts,bm}

\def\eqref#1{equation~\ref{#1}}

\def\1{\bm{1}}

\DeclareMathAlphabet{\mathsfit}{\encodingdefault}{\sfdefault}{m}{sl}
\SetMathAlphabet{\mathsfit}{bold}{\encodingdefault}{\sfdefault}{bx}{n}

\usepackage{hyperref}
\usepackage{url}

\usepackage{graphicx}
\usepackage{subcaption} 

\usepackage{amsmath}
\usepackage{amssymb}
\usepackage{amsthm}

\usepackage{booktabs} 
\usepackage{tabularx} 
\usepackage{array}

\theoremstyle{plain} 
\newtheorem{theorem}{Theorem}[section]
\newtheorem{proposition}[theorem]{Proposition}

\theoremstyle{definition} 

\theoremstyle{remark} 

\usepackage{enumitem}
\usepackage{booktabs}
\usepackage{multirow}

\usepackage{makecell}
\usepackage{siunitx}
 \usepackage{array}
 \newcolumntype{Z}{>{\hsize=1.25\hsize\raggedright\arraybackslash}X}  
 \newcolumntype{Y}{>{\hsize=0.75\hsize\raggedright\arraybackslash}X} 

 \usepackage{graphicx}   
 \usepackage{booktabs}
 \usepackage{makecell}
 \usepackage{multirow}
 \usepackage{siunitx}
\usepackage{adjustbox}

\title{Avey-B}

\author{Devang Acharya$^{*}$ and Mohammad Hammoud\thanks{Equal contribution.} \\[0.5em]
Avey AI\\[0.3em]
\texttt{\{dacharya,mhh\}@avey.ai} \\
}

\iclrfinalcopy 

\begin{document}

\maketitle
 
\begin{abstract}

Compact pretrained bidirectional encoders remain the backbone of industrial NLP under tight compute and memory budgets. Their effectiveness stems from self-attention’s ability to deliver high-quality bidirectional contextualization with sequence-level parallelism, as popularized by BERT\mbox{-}style architectures. Recently, \emph{Avey} was introduced as an autoregressive, attention\mbox{-}free alternative that naturally admits an encoder\mbox{-}only adaptation. In this paper, we reformulate Avey for the encoder-only paradigm and propose several innovations to its architecture, including decoupled static and dynamic parameterizations, stability\mbox{-}oriented normalization, and neural compression. Results show that this reformulated architecture compares favorably to four widely used Transformer-based encoders, consistently outperforming them on standard token-classification and information-retrieval benchmarks while scaling more efficiently to long contexts.

\end{abstract}

\section{Introduction}
\label{intro}


Pretrained bidirectional Transformer encoders, most notably BERT~\citep{devlin2019bert}, have been especially impactful in resource-constrained, application-specific settings, where compact models can be efficiently fine-tuned for downstream tasks and deployed under strict latency and memory budgets. Unlike unidirectional Transformer decoders, bidirectional encoders condition each token on both its left and right contexts, yielding fully contextualized representations that improve disambiguation and translate into stronger performance on certain discriminative tasks (e.g., classification, retrieval, and extractive question-answering)~\citep{liu2019roberta,wang2019glue,wang2019superglue,karpukhin2020dense,khattab2020colbert,rajpurkar2016squad,rajpurkar2018know}. Since BERT’s introduction, such encoders have seen broad and sustained adoption across academia and industry~\citep{muennighoff2023mteb,thakur2021beir,santhanam2022colbertv2,wang2022e5,su2023gte}, particularly targeting high-throughput, high-precision, and budget-constrained applications~\citep{lan2020albert,sanh2019distilbert,sun2020mobilebert,jiao2020tinybert}.


The BERT family’s success in research and industry was enabled by the Transformer~\citep{vaswani2017attention}, whose self-attention mechanism affords bidirectional contextualization while maintaining high parallelizability. However, the quadratic time and memory costs of full self-attention remain a central bottleneck~\citep{tay2022efficient, munkhdalai2024infiniattn}, limiting practical extension of context windows in cost-sensitive deployments. A large body of work has sought to mitigate this bottleneck (e.g., via using linear attention~\citep{katharopoulos2020transformers,choromanski2021performer,peng2025rwkv} and RNN-inspired architectures~\citep{gu2021s4,gupta2022dss,fu2022h3,gu2023mamba}), but little of it has been adapted to the bidirectional, encoder-only paradigm. Meanwhile, BERT itself was modernized through larger pretraining corpora, architectural refinements (e.g., FlashAttention~\citep{dao2022flashattention}, SwiGLU activations~\citep{shazeer2020glu}, and RoPE positional encoding~\citep{su2021roformer}), and new pretraining and fine-tuning strategies~\citep{liu2019roberta, portes2023mosaicbert, warner-etal-2025-smarter}, among others.



Most recently, Avey~\citep{hammoud2025don} was introduced as an autoregressive architecture that departs from both Transformer- and RNN-based designs. It partitions a sequence into splits, ranks and retrieves the most relevant splits for each current (or {\em target}) split, and applies a dynamically parameterized neural processor to contextualize them. By decoupling context width from sequence length, Avey enables efficient long-range interactions and extrapolation far beyond its training window, thereby facilitating practical context extensions under realistic compute budgets.

Concretely, Avey is composed of two principal components, a ranker and a neural processor. For each target split, the ranker selects the top-$k$ most relevant splits from the input sequence, which are then jointly processed by the neural processor. The neural processor comprises three modules, an enricher, a contextualizer, and a fuser. The enricher improves token representations by expanding their learnable features via a position-wise neural network. The contextualizer is an embedding-wise neural network with dynamic parameterization and cosine-similarity–based selectivity, enabling interactions between tokens in the target split and those in the retrieved splits. Lastly, the fuser integrates the resulting contextualized features with a subset of uncontextualized features preserved through a partial-embedding bypassing mechanism.


Although originally formulated for causal language modeling, Avey’s cosine-similarity–based selectivity and learned cross-embedding linear transformation make it naturally amenable to a bidirectional, encoder-style adaptation. In this paper, we introduce \textbf{Avey--B}, a bidirectional reformulation of Avey for the encoder\mbox{-}only setting, and compare it empirically against widely used and recently introduced Transformer\mbox{-}based encoders, namely, BERT~\citep{devlin2019bert}, RoBERTa~\citep{liu2019roberta}, ModernBERT~\citep{warner-etal-2025-smarter}, and NeoBERT~\citep{lebreton2025neobert}. We further propose architectural advances that enhance its effectiveness and efficiency, including: (1) decoupled static and dynamic parameterizations; (2) a stability\mbox{-}oriented, row\mbox{-}normalized similarity scores in the dynamic layers; and (3) a compression module that reduces retrieved context before contextualization in the neural processor.


To elaborate, Avey-B contextualizes tokens via either a learned static linear projection {\em or} a dynamic similarity matrix computed from cosine similarities, {\em in any given layer}. This contrasts with Avey, which multiplicatively couples the learned projection with cosine scores element-wise {\em in every layer}. By decoupling the static and dynamic parameterizations, Avey-B avoids destructive interactions between fixed weights and input-dependent scores, most notably inversion effects where a token highly similar to a neuron’s current token is forced to contribute less than a less-similar one. In addition, we normalize the cosine scores at each position by the sum of that position’s scores over all tokens, stabilizing training and consistently improving downstream task performance.


Alongside, we observe that extending Avey to the bidirectional paradigm without some modifications may introduce a scalability issue. Specifically, in the original design of Avey, each split is concatenated with its top-$k$ relevant splits and jointly contextualized in a single pass through the neural processor. Performing this for {\em every} split inflates the input size to roughly $k$ times the number of tokens, substantially increasing processing time. In the autoregressive regime, this overhead is mitigated by training on short context widths, leveraging Avey’s ability to extrapolate well beyond that. It is also tolerable during inference because {\em only} the most recent split is contextualized with its top-$k$ splits to generate the next token. In the bidirectional inference setting, however, this strategy is infeasible, since {\em all} splits must be contextualized to produce complete token-level representations.


Building on this observation and recognizing that inference efficiency is critical for encoder models (especially in industry where they are commonly used~\citep{Raghavan2020SearchOn, Zhu2019BingAzureBERT, Guo2020DeText, warner-etal-2025-smarter}), we introduce a neural compression scheme in the ranker. More precisely, we compress each split together with its top-$k$ retrieved splits back to the size of a single split via a learned linear projection. As a result, the neural processor contextualizes only as many tokens as in the original input sequence, avoiding redundant computations over the appended top-$k$ splits. Because the neural processor operates on each split independently, Avey-B achieves higher throughput than Transformer-based encoders, while preserving high accuracy across a wide range of downstream benchmarks.

To summarize, our main contributions in this paper are as follows:

\begin{itemize}
  
  \item We propose Avey-B, a bidirectional encoder architecture that capitalizes on Avey by decoupling static and dynamic parameterizations and introducing a lightweight normalization scheme for dynamic contextualization.
    
  \item We redesign Avey's ranker to compress each split’s top-$k$ retrieved context into a fixed token budget, making the neural processor’s per-split compute independent of $k$ while preserving the benefits of retrieving larger relevant token sets via increasing $k$.

  \item We conduct extensive design-choice and ablation studies to identify the most effective architectural configuration and demonstrate how each proposed idea contributes to the performance gains of Avey-B over the original Avey architecture.

  \item We show that Avey\text{-}B outperforms BERT~\citep{devlin2019bert} and NeoBERT~\citep{lebreton2025neobert} across all the evaluated benchmarks, and consistently surpasses RoBERTa~\citep{liu2019roberta} and ModernBERT~\citep{warner-etal-2025-smarter} on token\mbox{-}classification and information\mbox{-}retrieval tasks, despite being pretrained on \(\sim\)11\(\times\) fewer tokens than ModernBERT.

  \item We illustrate that Avey-B scales efficiently with sequence length, yielding substantially lower latency than Transformer\mbox{-}based encoders. Across \(128\)–\(96\,\text{K}\) tokens, Avey\mbox{-}B is consistently faster than all the evaluated Transformer baselines, and its advantage \emph{widens} with sequence length \(N\). For example, at \(N = 96\,\text{K}\), Avey\mbox{-}B outpaces ModernBERT and NeoBERT by \(3.38\times\) and \(11.63\times\), respectively.

  \item We characterize Avey\text{-}B’s scaling behavior via a power-law fit, \(T(N) \propto N^{-\alpha}\), and show that it exhibits a markedly smaller decay exponent (\(\alpha \approx 0.44\)) than ModernBERT (\(\alpha \approx 0.77\)) and NeoBERT (\(\alpha \approx 0.81\)), indicating that Avey\text{-}B’s throughput decreases at roughly half the rate of ModernBERT (and even more slowly relative to NeoBERT) as sequence length increases.
  
  \item We release the full implementation and pretrained checkpoints of Avey-B (see Section~\ref{sec:reproducibility}), enabling reproducibility and fostering future research.

\end{itemize}

The remainder of the paper is organized as follows. Section~\ref{related_work} reviews related work, and Section~\ref{background} provides background on Avey. Section~\ref{avey-b} describes the Avey-B architecture, including its bidirectional contextualization, decoupled parameterization, and neural compression. Section~\ref{experiments} presents the experimental setup, design choices and ablations, as well as effectiveness and efficiency results. Finally, we conclude in Section~\ref{conclusion}.

\vspace{-1.3em}
\section{Related Work}
\label{related_work}
\vspace{-0.5em}

The introduction of GPT~\citep{radford2018improving} in 2018 marked a turning point in large-scale language modeling, establishing the now-standard paradigm of pretraining Transformer-based models on massive unlabeled corpora followed by supervised fine-tuning on task-specific data. GPT optimized a causal language modeling (CLM) objective, pretraining a unidirectional, decoder-only Transformer~\citep{vaswani2017attention} for next-token prediction. The resulting pretrained model can then be effectively fine-tuned with modest labeled data to a broad range of downstream tasks, including text classification~\citep{wang2019glue}, natural language inference~\citep{bowman2015snli, williams2018multinli}, and question answering~\citep{rajpurkar2016squad, lai2017race}, to mention just a few. This pretrain–fine-tune paradigm yielded state-of-the-art performance on these tasks at the time~\citep{radford2018improving}.

BERT~\citep{devlin2019bert} extended this paradigm by replacing the unidirectional decoder with a fully bidirectional encoder. Concretely, it introduced two pretraining objectives, masked language modeling (MLM), which reconstructs randomly masked tokens in an input sequence, and next sentence prediction (NSP), which models inter-sentence relationships. By contextualizing tokens in both directions, BERT delivered substantial gains over causally pretrained models, particularly on benchmarks such as GLUE~\citep{wang2019glue}, MultiNLI~\citep{williams2018multinli}, and SQuAD~\citep{rajpurkar2016squad}, among others.

RoBERTa~\citep{liu2019roberta} robustly optimized BERT by retaining its overall architecture while systematically revisiting nearly every aspect of its pretraining setup. Key modifications included removing the NSP objective, pretraining with larger batches and longer sequences, adopting dynamic masking strategies, and scaling to substantially larger corpora. Building on this foundation, DeBERTa~\citep{he2021deberta, he2021debertav2, he2023debertav3} introduced disentangled attention, which separates content and positional information into distinct attention matrices, and improved fine-tuning stability through virtual adversarial training. Together, these innovations further advanced performance on some challenging benchmarks such as SuperGLUE~\citep{wang2019superglue}.

Subsequent work emphasized both architectural refinements and pretraining efficiency. For example, MosaicBERT~\citep{portes2023mosaicbert} integrated FlashAttention~\citep{dao2022flashattention}, ALiBi positional biases~\citep{press2022alibi}, and gated linear units (GLU)~\citep{dauphin2017gatedcnn,shazeer2020glu} to accelerate pretraining while maintaining strong downstream accuracy. NomicBERT~\citep{nussbaum2024nomic} adopted SwiGLU~\citep{shazeer2020glu} and rotary positional encodings (RoPE)~\citep{su2021roformer}. NeoBERT~\citep{lebreton2025neobert} combined RoPE, SwiGLU, and RMSNorm~\citep{zhang2019rmsnorm} with depth–width rebalancing and large\mbox{-}scale pretraining. ModernBERT~\citep{warner-etal-2025-smarter} pushed this trend further, employing many of these techniques (e.g., RoPE, FlashAttention, and alternating global/local attention), supporting context windows for up to 8{,}192 tokens, and pretraining on multi\mbox{-}trillion\mbox{-}token corpora.

All of the above models are Transformer-based, leveraging self-attention to provide effective bidirectional contextualization while maintaining high pretraining parallelism. Recently, a fundamentally different architecture named Avey~\citep{hammoud2025don} was introduced. Avey is attention\mbox{-}free and can process virtually unlimited sequence lengths (see Section~\ref{background}). Avey-B capitalizes on Avey to support bidirectional contextualization, mirroring the shift from GPT-style decoder-only to BERT-style encoder-only models in the Transformer family. We empirically compare Avey-B against BERT, RoBERTa, ModernBERT, and NeoBERT in Section~\ref{experiments}.

\vspace{-1.0em}
\section{Background}
\label{background}
\vspace{-0.5em}

The original Avey architecture decouples sequence length from context width by pairing a lightweight ranker with a data-dependent neural processor. We next describe both components.

\subsection{Ranker}
\label{sec:avey-ranker}

Avey partitions an input sequence of length $N$ into equal-sized \emph{splits} of $S$ tokens, applying zero-padding if $N$ is not divisible by $S$. For a given \emph{current} (or target) split, the ranker computes its relevance to each preceding split using the MaxSim operator~\citep{khattab2020colbert}, orders them by their MaxSim scores, and selects the top-$k$ splits for contextualization.

Before contextualization, the MaxSim scores of the top-$k$ selected splits are normalized by dividing each score by the maximum among them. Each selected split is then weighted by its normalized score and concatenated with the current split. This \emph{weighted-selective-split} mechanism prunes irrelevant global splits and scales the contribution of each retrieved split based on relevance. 

Crucially, the ranker is invoked only \emph{once} per full forward/backward pass, independent of the number of neural-processor layers. Matching each split against all preceding splits yields a training-time compute cost of $\mathcal{O}(N^{2}d)$, where $d$ is the embedding dimension.

\subsection{Neural Processor}
\label{sec:avey-neural-processor}

The neural processor takes as input the current split together with its weighted top-$k$ retrieved splits and processes them through a multi-layer architecture. Each layer consists of three modules, an \textbf{enricher}, a \textbf{contextualizer}, and a \textbf{fuser}.



The enricher is a single-layer, position-wise neural network applied independently to each token embedding. 
Given $C$ input embeddings arranged as $\mathbf{X}\in\mathbb{R}^{C\times d}$, the enricher computes a matrix $\mathbf{Z}\in\mathbb{R}^{C\times m}$ (with $m>d$) as follows:

\vspace{-0.5em}

\begin{equation}
\mathbf{Z} \;=\; \sigma\!\big(\mathbf{X}\mathbf{U} + \mathbf{b}\big),
\label{eq:enricher}
\end{equation}
where $\mathbf{U}\in\mathbb{R}^{d\times m}$ is
a learnable weight matrix, $\mathbf{b}\in\mathbb{R}^{C\times m}$ represents biases, and $\sigma(\cdot)$ is an activation function. 
The output $\mathbf{Z}$ is partitioned into a \emph{head} $\mathbf{Z}_h\in\mathbb{R}^{C\times m_h}$, which is bypassed directly to the fuser, and a \emph{tail} $\mathbf{Z}_t\in\mathbb{R}^{C\times m_t}$, which is forwarded to the contextualizer, with \(m = m_{h} + m_{t}\). 
This \emph{partial-embedding bypassing} technique preserves raw token-specific features and mitigates degradation effects (e.g., over-smoothing), as the number of layers is increased.


The contextualizer operates on the tail \(\mathbf{Z}_{t}\). Each \(m_t\)-dimensional tail embedding is split evenly into a \emph{gating} left half and a \emph{contextual} right half, yielding \(\mathbf{Z}_{tl}\in\mathbb{R}^{C\times \frac{m_t}{2}}\) and \(\mathbf{Z}_{tr}\in\mathbb{R}^{C\times \frac{m_t}{2}}\), respectively. Formally, the contextualizer is a single-layer, embedding-wise network that updates \(\mathbf{Z}_{tr}\) as follows:

\begin{equation}
\mathbf{c}(\mathbf{Z}_t)
\;=\;
\mathbf{Z}_{tl}\;\odot\;
\sigma\!\Big(
\big(\mathbf{V}\;\odot\;\mathcal{N}(\mathbf{Z}_{tr})\,\mathcal{N}(\mathbf{Z}_{tr})^{\top}\big)\,\mathbf{Z}_{tr}
\;+\;\mathbf{b}'
\Big),
\label{eq:contextualizer}
\end{equation}

where \(\mathbf{V}\in\mathbb{R}^{C\times C}\) is a learned cross-embedding matrix, \(\odot\) denotes element-wise (Hadamard) multiplication, \(\mathcal{N}(\cdot)\) applies row-wise \(\ell_{2}\) normalization (so \(\mathcal{N}(\mathbf{Z}_{tr})\,\mathcal{N}(\mathbf{Z}_{tr})^{\top}\) computes cosine similarities between embeddings), and \(\mathbf{b}'\) is an optional bias. Intuitively, each neuron aggregates statically and dynamically weighted contributions from other embeddings, and the resulting update is gated by \(\mathbf{Z}_{tl}\). The learned matrix \(\mathbf{V}\) provides position-sensitive mixing, so no additional positional encodings are required within the contextualizer.


The fuser combines the bypassed and contextualized streams and projects the output back to the model embedding dimension \(d\) as follows:
\begin{equation}
f(\mathbf{Z}) \;=\; [\,\mathbf{Z}_h \,\|\, \mathbf{c}(\mathbf{Z}_t)\,]\,\mathbf{O},
\label{eq:fuser}
\end{equation}

where \(\mathbf{O}\in\mathbb{R}^{(m_h + m_t/2)\times d}\) is a learned projection matrix. As with the enricher, the fuser is applied independently to each token embedding. Its output is merged with the enricher’s input within the same layer via a residual, element-wise addition.

Aggregating the costs of the ranker, enricher, contextualizer, and fuser over $L$ layers yields a training complexity of $\mathcal{O}(N^{2}d)$ for an input sequence of length $N$. At inference, only the most recent split is contextualized for autoregressive decoding, reducing the complexity to $\mathcal{O}(N)$.

\vspace{-0.8em}
\section{Avey-B}
\label{avey-b}
\vspace{-0.5em}

Avey-B is a bidirectional reformulation of Avey. We next elaborate on its architecture and computational implications.

\subsection{Bidirectional Contextualization}
\label{sec:avey-b-bidir}

Avey-B drops the autoregressive mask in Avey’s contextualizer, allowing each token representation to condition on both left and right contexts. Specifically, when a split is contextualized with its top-$k$ selected splits, all token interactions are permitted, without any causal masking. This converts Avey into an encoder-style architecture while preserving selective global access via the ranker.

\vspace{-1.0em}
\subsection{Decoupled Parametrization}
\label{sec:decoupled-param}
\vspace{-0.5em}

\begin{figure*}[t]
  \centering
  \includegraphics[width=\textwidth]{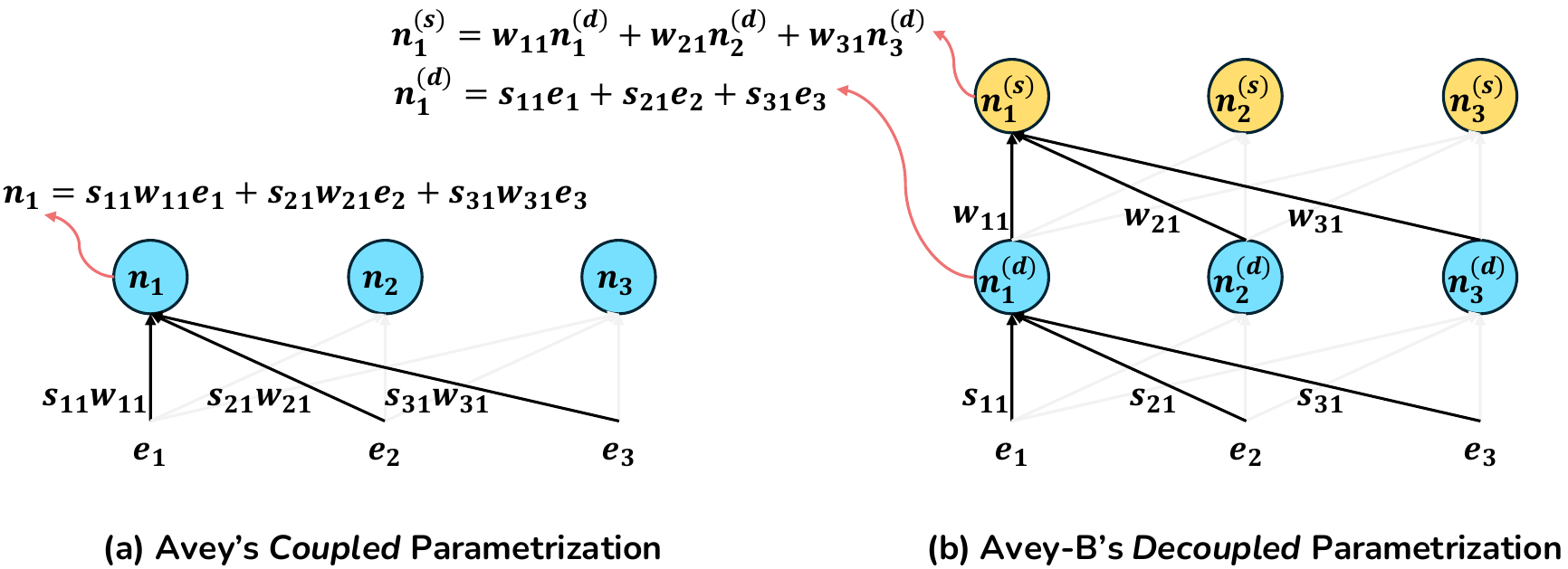}
  \caption{A simple illustration of coupled (a) and decoupled (b) parameterizations ($e_i = $ embedding $i$; $s_{ij} = $ cosine similarity score between $e_i$ and $e_j$; $n_i = $ neuron $i$, ${n_i}^{(d)} = $ neuron $i$ in dynamic layer $d$; ${n_i}^{(s)} = $ neuron $i$ in static layer $s$; and $w_{ij} = $ weight corresponding to $e_i$ or ${n_i}^{(d)}$ used in the weighted sum of $n_j$ or ${n_j}^{(s)}$, respectively).}
  \label{fig:param-decoup}
  \vspace{-1.2em}
\end{figure*}

As shown in Equation~\ref{eq:contextualizer}, Avey’s contextualizer performs an element-wise multiplication between a learned, static weight matrix $\mathbf{V}$ and an input-dependent cosine-similarity matrix produced by $\mathcal{N}(\mathbf{Z}_{tr})\,\mathcal{N}(\mathbf{Z}_{tr})^{\top}$. The resulting matrix is then used to linearly combine the input embeddings $\mathbf{Z}_{tr}$ into contextualized representations. This tight coupling of fixed parameters with data-driven relevance scores can lead to a pathological behavior, whereby a token highly similar to a neuron’s current token may contribute {\em less} than a less-similar one. This violates \emph{monotonicity with respect to relevance}, which suggests that a more relevant token should contribute at least as much as a less relevant one, and increasing a token’s relevance should never reduce or invert its contribution.

Fig.~\ref{fig:param-decoup} (a) demonstrates the issue. If the cosine similarity \(s_{21}\) exceeds \(s_{31}\), then embedding \(e_2\) should contribute at least as much to neuron \(n_{1}\)'s update as embedding \(e_3\). In Avey's coupled design, the element-wise product with learned weights (e.g., \(w_{21}\) and \(w_{31}\)) can invert this ordering. In particular, if \(w_{31}\!\gg\!w_{21}\), the effective contributions \(s_{21} w_{21}\) and \(s_{31} w_{31}\) may be reversed (especially at inference), weakening evidence accumulation from the most informative tokens.

Avey\text{-}B addresses this issue by \emph{decoupling} the two parameter sources (i.e., learned weights and input-driven similarities) and interleaving them across depth. Concretely, each layer is designated as either \emph{static} or \emph{dynamic}. A static layer applies a learned linear transformation to the embeddings, while a dynamic layer weights embeddings solely by cosine similarity. Alternating these layers preserves monotonicity of similarity-based updates \emph{within the dynamic layers} (i.e., if \(s_{21} > s_{31}\), token 2 contributes more than token 3), while retaining signed static weights in the static layers to encode inhibitory patterns without overturning the monotonic, similarity\mbox{-}respecting updates established by the dynamic layers.

To illustrate, Fig.~\ref{fig:param-decoup}\,(b) shows the effect of decoupled parametrization. If \(s_{21} > s_{31}\), the dynamic layer for neuron \(n_{1}^{(d)}\) assigns a larger weight to \(e_2\) than to \(e_3\), with no learned weights intervening in this similarity\mbox{-}based update. In the subsequent static layer (assuming an interleaved dynamic--static pattern\footnote{Appendix~\ref{static-dynamic-patterns} provides an empirical study of different interleaving strategies.}) for \(n_{1}^{(s)}\), both contributions are scaled \emph{uniformly} by the same coefficient \(w_{11}\). Hence, the ordering, or at least the \emph{magnitude ranking}, established by the dynamic layer is preserved. More precisely, if \(w_{11} > 0\), the ordering is maintained, since \(s_{21} > s_{31}\) implies \(w_{11}s_{21} > w_{11}s_{31}\). If, however, \(w_{11} < 0\), both contributions change sign uniformly, but their relative magnitudes are unchanged, as \(\lvert w_{11}s_{21} \rvert / \lvert w_{11}s_{31} \rvert = s_{21} / s_{31} > 1\).

Crucially, the static layer is \emph{similarity\mbox{-}agnostic}. Specifically, it cannot modify the dynamic similarities \(s_{21}\) and \(s_{31}\), nor can it introduce token\mbox{-}specific, {\em similarity\mbox{-}conditioned} sign flips. At most, it applies a global (possibly negative) gain \(w_{11}\) to the neuron’s aggregate, which may change the overall sign but cannot alter the magnitude ordering imposed by the dynamic layer. We analyze the effect of retaining signed static weights in Appendix~\ref{signed-unsigned} and observe consistent improvements from allowing these possible global negative gains. We further ablate the decoupling design in Appendix~\ref{ablation} and provide a statistical comparison of coupled versus decoupled parameterizations in Appendix~\ref{insights-decoupling}.

To this end, decoupling static and dynamic computations preserves the monotonicity guarantee \emph{within} each dynamic layer, while still permitting representation shaping in static layers. Although a static layer can change the representations from which subsequent similarities are computed, it cannot modify the scores already assigned by a preceding dynamic layer and therefore does not violate that layer’s monotonicity. We provide a formal proof of this guarantee in Appendix~\ref{monotonicity_proof}.

Formally, let $\mathbf{Z}_{\mathrm{tr}}\in\mathbb{R}^{C\times d'}$ denote the matrix of contextual, right-tail components for $C$ enriched embeddings, where $d'$ is the contextualizer’s right-tail dimension (see Section~\ref{background} for more information on all notations). In Avey-B, a static layer applies a learned cross-embedding linear transformation as follows:

\begin{equation}
\mathbf{c}_{\mathrm{static}}(\mathbf{Z}) \;=\; \sigma\!\big(\mathbf{V}\,\mathbf{Z}_{\mathrm{tr}} + \mathbf{b}^{(s)}\big),
\label{eq:static-layer}
\end{equation}

where $\mathbf{V}\in\mathbb{R}^{C\times C}$ is a learned matrix, $\mathbf{b}^{(s)}\in\mathbb{R}^{C\times d'}$ is an optional bias, and $\sigma(\cdot)$ is an activation function. Intuitively, each neuron first aggregates linearly the $C$ embeddings and then applies the pointwise activation $\sigma$.

In contrast, an Avey-B's dynamic layer computes an input-dependent similarity matrix from $\mathbf{Z}_{\mathrm{tr}}$ and utilizes it to mix embeddings as follows:

\begin{align}
\mathbf{S} &\;=\; \mathcal{N}(\mathbf{Z}_{\mathrm{tr}})\,\mathcal{N}(\mathbf{Z}_{\mathrm{tr}})^\top \;\in\mathbb{R}^{C\times C}, 
\label{eq:raw-sim}\\
\widetilde{\mathbf{S}}_{i,j} &\;=\; \frac{\mathbf{S}_{i,j}}{\sum_{j=1}^{C}\mathbf{S}_{i,j} + \varepsilon}
\qquad\text{(row-wise sum normalization)}, 
\label{eq:normalization}\\
\mathbf{c}_{\mathrm{dyn}}(\mathbf{Z}) &\;=\; \sigma\!\big(\,\widetilde{\mathbf{S}}\,\mathbf{Z}_{\mathrm{tr}} + \mathbf{b}^{(d)}\,\big).
\label{eq:dynamic-layer}
\end{align}

Here $\mathcal{N}(\cdot)$ denotes per-row $\ell_{2}$ normalization to unit length so that $\mathbf{S}$ encodes cosine similarities; $\varepsilon>0$ is a small stabilizer ensuring a positive denominator; $\mathbf{b}^{(d)}\in\mathbb{R}^{C\times d'}$ is an optional bias, and \(\widetilde{\mathbf{S}}\) is a simple \emph{sum-normalized} similarity matrix. This row-wise normalization yields a row-stochastic similarity operator (row sums $\le 1$), which bounds per-row gain and mitigates the growth of large singular values through depth, improving numerical conditioning and trainability. In the unnormalized case (i.e., in Avey), inflated singular values can drive activation and gradient growth with depth, resulting in unstable optimization and degraded generalization. We ablate this normalization technique and show consistent gains over softmax-based and RMS-style alternatives in Appendix~\ref{normalization-techniques}.

\vspace{-0.4em}
\subsection{Neural Compression }
\label{sec:neural-compression}
\vspace{-0.4em}

A key bottleneck in extending Avey to the bidirectional setting is the per-split concatenation strategy, whereby each current split is concatenated with its top-$k$ retrieved splits for contextualization. This is manageable in Avey’s autoregressive inference because only the most recent split is expanded. In the bidirectional regime, however, every split within the contextualizer’s window must be expanded, which inflates the effective sequence length by a factor of $k+1$.


To mitigate this, Avey\mbox{-}B introduces a neural compressor within the ranker to condense the concatenated \((k+1)S\)-token block back to \(S\) tokens, where \(S\) denotes the split size. Specifically, the compressor is an embedding\mbox{-}wise neural network that maps the \((k+1)S\) input tokens to \(S\) representative tokens, effectively distilling cross\mbox{-}split information before it is passed to the neural processor. To preserve signal from the block’s current split, Avey\mbox{-}B adds a residual connection between the compressor output and the split’s original \(S\) tokens, which improves stability and downstream effectiveness. An ablation studying the impact of this residual on Avey-B’s accuracy is presented in Appendix~\ref{ablation}.

Formally, let $\mathbf{X}_{\text{cat}}\in\mathbb{R}^{(k+1)S\times d}$ be the concatenation of a \emph{single} split with its top-$k$ retrieved splits, where $d$ denotes the embedding dimension. Subsequently, Avey\text{-}B produces a compressed output $\widehat{\mathbf{X}}\in\mathbb{R}^{S\times d}$ as follows:

\begin{equation}
\widehat{\mathbf{X}} \;=\; \mathbf{P}\,\mathbf{X}_{\text{cat}}
\label{eq:compression}
\end{equation}

where $\mathbf{P}\in\mathbb{R}^{S\times (k{+}1)S}$ is a learnable matrix that performs a linear cross-token transformation, and $\widehat{\mathbf{X}}$ replaces $\mathbf{X}_{\text{cat}}$ as the input to the neural processor. Because $\mathbf{P}$ is a learned matrix, the compressor can preserve globally informative content while discarding potential redundancy, yielding a favorable accuracy/throughput trade-off. We study the effect of the compressor on Avey\text{-}B’s accuracy and throughput in Appendix~\ref{ablation}.

As discussed in~\citep{hammoud2025don}, computation in the neural processor largely dominates that of the ranker. As such, when Avey\text{-}B reduces the number of tokens contextualized per split from $(k{+}1)S$ to $S$, throughput improves by \({4.37}\times\) (see Fig.~\ref{fig:throughput_ablation} in Appendix~\ref{ablation}), albeit leaving Avey’s asymptotic complexity unchanged (still quadratic with respect to the sequence length $N$).

\vspace{-1.2em}
\section{Experiments}
\label{experiments}
\vspace{-0.5em}

\subsection{Experimental Setup}
\label{exp_setup}
\vspace{-0.4em}

In this section, we compare Avey\text{-}B against widely used and recently introduced Transformer-based encoders, namely, BERT~\citep{devlin2019bert}, RoBERTa~\citep{liu2019roberta}, ModernBERT~\citep{warner-etal-2025-smarter}, and NeoBERT~\citep{lebreton2025neobert}. We evaluate two Avey\text{-}B model sizes, \emph{base} and \emph{large}, each pretrained on 180B tokens drawn from the FineWeb corpus~\citep{fineweb2023}. Pretraining details and information about all the evaluated models are provided in Appendix~\ref{pre-training-meth}.

To assess effectiveness, we adopt the evaluation protocol of Boukhlef \emph{et al.}~\citep{gisserot2025should}, targeting breadth across four downstream categories prevalent in practice, including Sequence Classification (SC), Token Classification (TC), Question Answering (QA), and Information Retrieval (IR). Each category is represented by three established benchmarks, namely, MNLI~\citep{williams2018multinli}, QQP~\citep{wang2017bilateral}, and SST-2~\citep{socher2013recursive} under SC; CoNLL-2003~\citep{sang2003introduction}, OntoNotes~\citep{hovy2006ontonotes}, and UNER~\citep{mayhew2023universal} under TC; ReCoRD~\citep{wang2019superglue}, SQuAD~\citep{rajpurkar2016squad}, and SQuAD-v2~\citep{rajpurkar2018know} under QA; and MLDR~\citep{multi2024m3}, MS~MARCO~\citep{bajaj2016ms}, and NQ~\citep{kwiatkowski2019natural} under IR\footnote{Beyond all these short-range benchmarks, Appendix~\ref{long-range} evaluates the long-context capabilities of Avey\text{-}B on a synthetic needle-in-a-haystack (NIAH) benchmark, with sequence lengths extending up to 96k tokens.}.

We fine-tuned benchmarks under SC and TC for 1 epoch, QA for 4 epochs, and IR for 1{,}000 optimization steps. For each benchmark, we swept four learning rates $\{2\times10^{-5},\,6\times10^{-5},\,1\times10^{-4},\,5\times10^{-4}\}$ and trained each configuration with 10 independent random seeds. Akin to~\citep{liu2019roberta}, the reported results for each model are the \emph{median} scores across seeds at the \emph{best} learning rate\footnote{Appendix~\ref{variance-results} further reports a cross-seed variance analysis, computing the standard deviation (SD) across the 10 independent runs for each model–benchmark pair. These SD values quantify sensitivity to random initialization and provide an additional lens on optimization stability and robustness beyond median performance.}. SC is evaluated with accuracy, TC and QA with F1 score, and IR with NDCG@10. Lastly, following Boukhlef \emph{et al.}~\citep{gisserot2025should}, we used linear learning-rate decay with warmup over the first 10\% of steps.

To evaluate efficiency, we report both \emph{latency} (seconds per forward pass) and \emph{throughput} (tokens per second) as functions of input context length, using a fixed batch size of 8. All models are benchmarked on NVIDIA H200 GPUs under identical software stacks and numerical precision settings to ensure a fair comparison. These measurements provide a direct and controlled assessment of scalability and deployment efficiency between Avey\text{-}B and Transformer-based encoders.

\vspace{-0.8em}
\subsection{Design Choices and Ablations}
\label{design-choices-ablations}

\vspace{-0.4em}

\begin{table*}[t]
\centering
\caption{Design and masked language modeling (MLM) choices.}
\label{tab:design-choices}
\setlength{\tabcolsep}{5pt}
\renewcommand{\arraystretch}{1.12}
\small
\begin{tabularx}{\textwidth}{@{}Z Y >{\raggedright\arraybackslash}l@{}}
\toprule
\textbf{Question} & \textbf{Answer} & \textbf{Experiments} \\
\midrule
What is the most effective arrangement of static (S) and dynamic (D) layers? &
Interleaved layers with a repeating \(S \rightarrow D\) pattern &
Appendix~\ref{static-dynamic-patterns} \\
\midrule
What is the most effective normalization technique within the dynamic layers, Softmax, RMS Norm, or Row-wise normalization? &
Row-wise normalization &
Appendix~\ref{normalization-techniques} \\
\midrule
What are the best values for sequence length \(N\), split size \(S\), and top \(k\) splits? &
\(N = 2048\), \(S = 256\), \(k = 3\) &
Appendix~\ref{best-conf} \\
\midrule
Should the ranker operate bidirectionally as well? &
No &
Appendix~\ref{bidrectional-ranker} \\
\midrule
What is the best masking rate? &
20\% &
Appendix~\ref{masking-rates} \\
\bottomrule
\end{tabularx}
\end{table*}

Following the methodology established in the Avey work~\citep{hammoud2025don}, we conduct systematic design-choice studies to determine effective architectural configurations for Avey\text{-}B. In addition, we sweep the masking rate to identify a robust pretraining setting. Table~\ref{tab:design-choices} summarizes these investigations and references the corresponding experiments supporting each design decision.

We further conduct comprehensive ablation studies to quantify the contribution of each architectural refinement in Avey\text{-}B. In particular, we evaluate the impacts of: (1) decoupling and interleaving static and dynamic parameterizations; (2) incorporating row-wise normalization within the dynamic layers; and (3) integrating a neural compressor into the ranker. Appendix~\ref{ablation} reports the complete results along with detailed analysis.


\vspace{-0.5em}
\subsection{Effectiveness}
\label{effectiveness}
\vspace{-0.4em}

\begin{table*}[t]
\centering
\setlength{\tabcolsep}{1.2pt}
\renewcommand{\arraystretch}{1.12}
\caption{Effectiveness results for several encoders at different scales (M = Medium).}
\label{tab:main-results}
\resizebox{\textwidth}{!}{
\begin{tabular}{@{}l@{\hspace{8pt}}l*{16}{S}@{}}
\toprule
\multirow{2}{*}{} & \multirow{2}{*}{Model} &
\multicolumn{4}{c}{SC} & \multicolumn{4}{c}{TC} & \multicolumn{4}{c}{QA} & \multicolumn{4}{c}{IR} \\
\cmidrule(lr){3-6}\cmidrule(lr){7-10}\cmidrule(lr){11-14}\cmidrule(lr){15-18}
& &
\multicolumn{1}{c}{\makecell{MNLI}} &
\multicolumn{1}{c}{\makecell{QQP}} &
\multicolumn{1}{c}{\makecell{SST-2}} &
\multicolumn{1}{c}{\makecell{\textbf{Avg.}}} &
\multicolumn{1}{c}{\makecell{CONLL}} &
\multicolumn{1}{c}{\makecell{Onto.}} &
\multicolumn{1}{c}{\makecell{UNER}} &
\multicolumn{1}{c}{\makecell{\textbf{Avg.}}} &
\multicolumn{1}{c}{\makecell{ReCoRD}} &
\multicolumn{1}{c}{\makecell{SQuAD}} &
\multicolumn{1}{c}{\makecell{SQuAD\\v2}} &
\multicolumn{1}{c}{\makecell{\textbf{Avg.}}} &
\multicolumn{1}{c}{\makecell{MLDR}} &
\multicolumn{1}{c}{\makecell{MS\\MARCO}} &
\multicolumn{1}{c}{\makecell{NQ}} &
\multicolumn{1}{c}{\makecell{\textbf{Avg.}}} \\
\midrule
\multirow{4}{*}{\rotatebox[origin=c]{90}{Base}}
& Avey\text{-}B     & 83.58 & \bfseries 89.81 & \bfseries 92.94 & 88.78 & \bfseries 92.88 & \bfseries 93.80 & \bfseries 94.10 & \bfseries 93.59 & 44.03 & 74.44 & 68.88 & 62.45 & \bfseries 63.83 & \bfseries 88.14 & \bfseries 83.62 & \bfseries 78.53 \\
& BERT             & 81.92 & 88.57 & 90.94 & 87.14 & 90.25 & 91.03 & 88.20 & 89.82 & 36.76 & 72.20 & 63.99 & 57.65 & 57.42 & 81.15 & 80.66 & 73.08 \\
& RoBERTa          & 86.42 & 89.12 & 92.78 & 89.44 & 90.55 & 92.11 & 88.16 & 90.27 & \bfseries 67.86 & \bfseries 80.68 & 76.62 & \bfseries 75.05 & 56.07 & 86.47 & 80.30 & 74.28 \\
& ModernBERT       & \bfseries 86.72 & 89.81 & 92.32 & \bfseries 89.61 & 92.30 & 93.74 & 92.30 & 92.78 & 65.73 & 80.23 & \bfseries 77.36 & 74.44 & 54.29 & 88.09 & 75.24 & 72.54 \\
\midrule
\multirow{1}{*}{\rotatebox[origin=c]{90}{M}}
& NeoBERT          & 82.53 & 88.88 & 84.69 & 85.36 & 87.55 & 88.88 & 88.17 & 88.20 & 37.74 & 64.84 & 64.42 & 55.67 & 39.98 & 70.76 & 59.43 & 56.72 \\
\midrule
\multirow{4}{*}{\rotatebox[origin=c]{90}{Large}}
& Avey\text{-}B     & 85.66 & 89.22 & 94.38 & 89.75 & \bfseries 93.60 & \bfseries 94.09 & \bfseries 94.32 & \bfseries 94.00 & 58.22 & 77.30 & 72.46 & 69.32 & \bfseries 67.05 & 88.72 & \bfseries 86.24 & \bfseries 80.67 \\
& BERT             & 85.08 & 89.27 & 92.26 & 88.87 & 88.54 & 90.71 & 86.09 & 88.44 & 52.02 & 77.93 & 72.96 & 67.64 & 61.08 & 87.71 & 85.42 & 78.07 \\
& RoBERTa          & 90.16 & 89.49 & 94.67 & 91.44 & 91.71 & 92.70 & 88.79 & 91.07 & \bfseries 80.86 & \bfseries 84.00 & \bfseries 83.04 & \bfseries 82.63 & 58.50 & \bfseries 89.43 & 85.91 & 77.95 \\
& ModernBERT       & \bfseries 90.53 & \bfseries 90.73 & \bfseries 95.99 & \bfseries 92.41 & 92.43 & 93.79 & 92.92 & 93.05 & 73.05 & 82.02 & 79.96 & 78.34 & 59.64 & 88.82 & 81.36 & 76.61 \\
\bottomrule
\end{tabular}}
\vspace{-1.2em}
\end{table*}

We now evaluate Avey\text{-}B on standard SC, TC, QA, and IR benchmarks, comparing it against BERT, RoBERTa, ModernBERT, and NeoBERT. For all baselines we consider \emph{base} and \emph{large} configurations (see Table~\ref{tab:eval_models} in Appendix~\ref{pre-training-meth}), except for NeoBERT which is evaluated on its only publicly available size, that is, \emph{medium}. Table~\ref{tab:main-results} summarizes all the results.

At the \emph{base} scale (and \emph{medium} for NeoBERT), Avey\text{-}B surpasses BERT and NeoBERT across all task categories, despite using $\sim$85M fewer parameters than NeoBERT. It also delivers the strongest results on TC and IR, outperforming \emph{all} Transformer-based models in both categories. For SC, Avey\text{-}B attains the best scores on QQP (tied with ModernBERT) and SST-2, while trailing RoBERTa and ModernBERT on MNLI. For QA, Avey\text{-}B leads on SQuAD-v2 but lags RoBERTa and ModernBERT on ReCoRD and SQuAD.

At the \emph{large} scale (and \emph{medium} for NeoBERT), Avey\text{-}B again outperforms BERT and NeoBERT across all task categories. It also offers the strongest results on TC and IR, surpassing \emph{every} Transformer-based model. Notably, the Avey\text{-}B \emph{base} model even exceeds all \emph{large} Transformer encoders on TC and IR (despite also being pretrained on \(\sim\)11\(\times\) fewer tokens than ModernBERT, for example). These results highlight Avey\text{-}B’s advantage for both local, span-sensitive decisions (TC) and long-document encoding (IR).

In summary, at both the \emph{base} and \emph{large} scales, Avey\text{-}B surpasses BERT and NeoBERT across all evaluated benchmarks and delivers uniform gains over all baselines on TC and IR tasks. We attribute these improvements to two key architectural properties. First, TC tasks rely heavily on localized evidence within short spans; Avey\text{-}B’s split-based processing, combined with pruning of low-relevance splits and tokens, enhances the signal-to-noise ratio and sharpens token-level representations. Second, IR tasks benefit from selectively coupling globally relevant content with its immediate local context when encoding long documents, an inductive bias explicitly enforced by Avey\text{-}B’s retrieval mechanism. In contrast, fully bidirectional processing over all tokens, as in Transformer-based encoders, tends to admit distractors and dilute relevance as sequence length increases.


\vspace{-0.5em}
\subsection{Efficiency}
\label{efficiency}
\vspace{-0.4em}

Avey is a recent architecture and still lacks a fused\mbox{-}kernel (CUDA/Triton) implementation. Consequently, we evaluate its inference efficiency using \texttt{torch.compile}, which performs graph capture and backend code generation but stops short of handcrafted, specialized fused kernels available to mature Transformer encoders. We denote this configuration as \emph{Avey\mbox{-}B\mbox{-}torch\mbox{-}compile}. To quantify compilation gains and offer a reference without compiler optimizations, we also report Avey-B’s efficiency in the \emph{eager} PyTorch mode, referred to as \emph{Avey\mbox{-}B\mbox{-}eager}. For Transformer baselines, we employ ModernBERT and NeoBERT as representative encoders, especially since they were both recently modernized and optimized using FlashAttention~\citep{dao2022flashattention}, RoPE positional encoding~\citep{su2021roformer}, and several other efficiency techniques (see Section~\ref{related_work}).

\begin{figure*}[t]
  \centering
  \includegraphics[width=\textwidth]
  {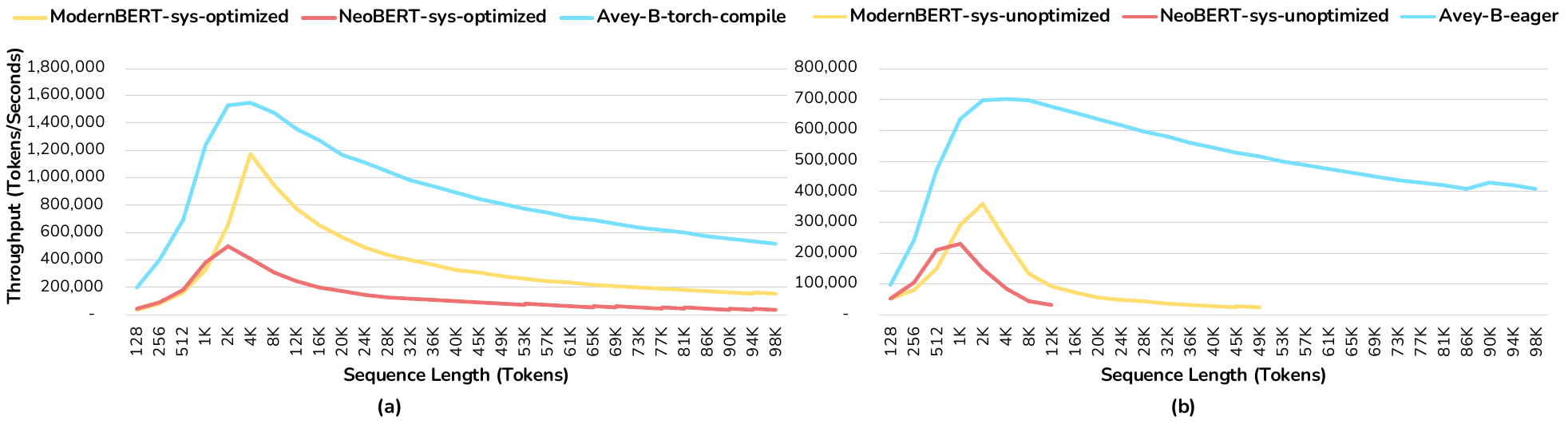}
  \caption{Throughput of Avey\text{-}B, ModernBERT, and NeoBERT on NVIDIA B200 GPUs with mixed precision (BF16). We use Avey\text{-}B \emph{base}, ModernBERT \emph{base}, and NeoBERT \emph{medium} (the only publicly available size). Avey\text{-}B is shown in (a) as optimized using \texttt{torch.compile} (no fused-kernel implementation is available yet) and in (b) as unoptimized (eager). For ModernBERT and NeoBERT, throughput is shown for system–optimized ({\em with} FlashAttention) and system–unoptimized (eager) variants in (a) and (b), respectively.}
  \label{fig:throughput1}
  \vspace{-1.1em}
\end{figure*}

RoPE enables evaluation beyond a model’s pre\mbox{-}trained context window, which we exploit to measure throughput and latency for ModernBERT and NeoBERT at extended sequence lengths. While such extrapolation may degrade downstream task effectiveness, this does not affect our analysis, as our focus in this section is strictly on efficiency rather than model quality. In contrast, Avey’s architecture decouples context width from sequence length, allowing Avey\text{-}B to operate at arbitrarily long inputs \emph{without} additional pre\mbox{-}training at those lengths. We therefore extend sequence length as needed and compare Avey\text{-}B \emph{base} against ModernBERT \emph{base} and NeoBERT \emph{medium} (the only publicly available variant), reporting throughput here and latency in Appendix~\ref{latency-results}. The qualitative conclusions drawn in this section, however, hold consistently for both throughput and latency.


For ModernBERT and NeoBERT, we additionally report throughput under both optimized and unoptimized configurations, corresponding to execution \emph{with} and \emph{without} FlashAttention. We denote these settings as \emph{sys\mbox{-}optimized} and \emph{sys\mbox{-}unoptimized}, respectively. As shown in Fig.~\ref{fig:throughput1}, the throughput curves for all encoders, across sequence lengths ranging from 128 to 96k tokens and under both settings, exhibit a consistent profile, that is, throughput increases at short lengths, plateaus at intermediate lengths, and declines at large lengths. This pattern arises from the fixed batch size of eight used throughout the study. Specifically, each forward pass incurs a constant memory-loading overhead, which dominates when relatively few tokens are processed, resulting in underutilized compute resources. As sequence length grows, more tokens are processed per batch, amortizing communication overhead and improving hardware utilization. At sufficiently large sequence lengths, however, computation becomes the primary bottleneck due to increased arithmetic cost, memory-bandwidth pressure, and the growth of intermediate activations, leading to the observed decline in throughput.


Despite sharing this qualitative profile, the encoders differ markedly in their quantitative scaling. We characterize long-context throughput using a power-law decay model, \(T(N) \propto N^{-\alpha}\), where \(N\) denotes the sequence length and smaller exponents~\(\alpha\) indicate better long-context efficiency. Under optimized settings (see Fig.~\ref{fig:throughput1} (a)), ModernBERT\mbox{-}sys\mbox{-}optimized and NeoBERT\mbox{-}sys\mbox{-}optimized demonstrate decay exponents of \(\alpha_{\text{ModernBERT}} = 0.77\) and \(\alpha_{\text{NeoBERT}} = 0.81\), respectively, consistent with the bandwidth- and memory-driven limitations inherent to quadratic self-attention. In contrast, Avey\mbox{-}B\mbox{-}torch\mbox{-}compile displays a substantially milder decay with exponent \(\alpha_{\text{Avey-B}} = 0.44\), sustaining significantly higher throughput across the long-context regime.

The unoptimized measurements further accentuate these differences (see Fig.~\ref{fig:throughput1} (b)). ModernBERT\mbox{-}sys\mbox{-}unoptimized and NeoBERT\mbox{-}sys\mbox{-}unoptimized exhibit considerably steeper decay, with exponents \(\alpha_{\text{ModernBERT}} = 1.03\) and \(\alpha_{\text{NeoBERT}} = 1.30\), and both encounter out-of-memory failures well before the maximum tested sequence length. Conversely, Avey\mbox{-}B\mbox{-}eager again achieves the mildest decay, with \(\alpha_{\text{Avey-B}} = 0.33\), and maintains stable throughput across the entire sequence-length range. These results indicate that Avey\mbox{-}B’s scaling advantage is structural rather than an artifact of kernel-level or compiler optimizations. More precisely, its neural processor is inherently less sensitive to sequence length because computation depends on the fixed split size \(S\), not the full sequence length \(N\). With \(N/S\) splits, the total processing cost is \((N/S)\times S^{2} = NS = \mathcal{O}(N)\), yielding linear scaling in \(N\) and a growing throughput advantage at long contexts, while still being the fastest tested encoder at short contexts.

\vspace{-1.1em}
\section{Conclusion}
\label{conclusion}
\vspace{-0.8em}

In this paper, we presented Avey\text{-}B, a bidirectional encoder built on Avey, a new attention\mbox{-}free foundation model. Avey\text{-}B contributes three architectural innovations: (1) decoupling static and dynamic parameterizations, (2) row\mbox{-}normalized similarity in the dynamic layers, and (2) a neural compression module for improving effectiveness and efficiency. Results show that Avey\text{-}B delivers consistent gains over Transformer-based encoders, including BERT, RoBERTa, ModernBERT, and NeoBERT on token classification and information retrieval, while outperforming BERT and NeoBERT on every evaluated benchmark. These findings entail that attention might not be the only viable route to strong bidirectional encoders and motivate further study of retrieval\mbox{-}conditioned, non\mbox{-}attention architectures.

\section{Reproducibility}
\label{sec:reproducibility}

All results reported in this paper are fully reproducible. Section~\ref{avey-b} provides a detailed specification of Avey\text{-}B’s architectural components, while the complete experimental protocol is described in Section~\ref{exp_setup} and Appendix~\ref{pre-training-meth}. The source code is publicly available at \url{https://github.avey.ai/avey-b}. The repository includes: (1) scripts for pretraining, fine\mbox{-}tuning, and evaluation; (2) configuration files containing the exact hyperparameters used in every experiment; (3) data preprocessing instructions along with dataset references and splits; and (4) environment specifications and execution scripts to reproduce all reported tables and figures. Running the provided commands on hardware comparable to our experimental setup reproduces the reported results within expected random-seed variance.

\section*{Acknowledgments}
\label{sec:acknowledgments}

We gratefully acknowledge the Ministry of Communications and Information Technology (MCIT), Qatar, for providing the computational resources that enabled a substantial portion of the experiments reported in this work.

\bibliography{iclr2026_conference}
\bibliographystyle{iclr2026_conference}

\appendix
\section{Monotonicity Under Decoupling}
\label{monotonicity_proof}

In Section~\ref{sec:decoupled-param}, we claimed that decoupling static and dynamic layers maintains
\emph{monotonicity with respect to relevance} within each dynamic layer, and that this guarantee is isolated
from what a subsequent static layer does. We now formalize and prove this claim.

\paragraph{Setup 1 (dynamic layer).} 
As defined in Section~\ref{sec:decoupled-param}, given $\mathbf{Z}_{\mathrm{tr}}\in\mathbb{R}^{C\times d'}$ and fixing a target embedding (row) $i$,
Avey\text{-}B’s dynamic layer computes:
\begin{subequations}\label{eq:dyn-layer}
\begin{align}
\mathbf{S} &= \mathcal{N}(\mathbf{Z}_{\mathrm{tr}})\,\mathcal{N}(\mathbf{Z}_{\mathrm{tr}})^\top, 
\label{eq:dyn-layer:S}\\
\widetilde{\mathbf{S}}_{i,j} &= \frac{\mathbf{S}_{i,j}}{\sum_{j'=1}^{C}\mathbf{S}_{i,j'} + \varepsilon}, 
\label{eq:dyn-layer:Stilde}\\
\mathbf{c}_{\mathrm{dyn},i} &= \sigma\!\left(\sum_{j=1}^{C}\widetilde{\mathbf{S}}_{i,j}\,\mathbf{Z}_{\mathrm{tr},j} + \mathbf{b}^{(d)}_{i}\right),
\label{eq:dyn-layer:update}
\end{align}
\end{subequations}
where $\mathcal{N}(\cdot)$ is per-row $\ell_2$ normalization (so $\mathbf{S}$ contains cosine similarities),
$\varepsilon>0$ is a stabilizer, $\sigma$ is a pointwise monotone activation, and $\mathbf{b}^{(d)}$ is an optional
bias.

\paragraph{Assumptions.} We assume the following:

\textbf{(A1) Nonnegative similarities.} The enricher uses a nonnegative pointwise activation, namely, $\mathrm{ReLU}^2$~\citep{hammoud2025don}, hence, rows of $\mathcal{N}(\mathbf{Z}_{\mathrm{tr}})$ are
nonnegative, implying $\mathbf{S}_{i,j}\ge 0$.

\textbf{(A2) Positive normalization.} For each row $i$,
$\sum_{j=1}^{C}\mathbf{S}_{i,j}+\varepsilon>0$ with $\varepsilon>0$.

\textbf{(A3) Monotone activation.} $\sigma$ is monotone nondecreasing (e.g., Avey-B uses ReLU in Equation~\ref{eq:dyn-layer:update}).\\

\begin{proposition}[dynamic layer monotonicity]\label{prop:dyn-monotone}

\emph{For a fixed target row $i$ and any two embeddings $j_1,j_2\in\{1,\ldots,C\}$:}

\begin{enumerate}[label=(\roman*)]
\item Order preservation. \emph{If $\mathbf{S}_{i,j_1}\ge \mathbf{S}_{i,j_2}$ then
$\widetilde{\mathbf{S}}_{i,j_1}\ge \widetilde{\mathbf{S}}_{i,j_2}$.}
\item Self-monotonicity. \emph{Increasing $\mathbf{S}_{i,j}$ (while holding $\{\mathbf{S}_{i,k}\}_{k\neq j}$ fixed)
weakly increases $\widetilde{\mathbf{S}}_{i,j}$ and does not increase $\widetilde{\mathbf{S}}_{i,k}$ for $k\neq j$.}
\end{enumerate}

\emph{Consequently, a more relevant token (higher similarity) receives at least as large (and typically larger)
weight than a less relevant token, and increasing its relevance cannot reduce or flip the sign of its
contribution in the update within the dynamic layer (i.e., the update is monotone with respect to relevance).}
\end{proposition}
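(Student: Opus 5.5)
Fix the target row $i$ and write $D_i=\sum_{j'=1}^{C}\mathbf{S}_{i,j'}+\varepsilon$ for the common denominator in Equation~\ref{eq:dyn-layer:Stilde}. The whole argument rests on two facts: every weight $\widetilde{\mathbf{S}}_{i,j}$ in row $i$ shares this single positive scalar $D_i$, and the entries $\mathbf{S}_{i,j}$ are nonnegative. Part (i) is then a one-line computation; part (ii) is a derivative computation; the consequence follows by combining them with (A3).

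For part (i), (A2) gives $D_i>0$, so the map $x\mapsto x/D_i$ is strictly increasing on $\mathbb{R}$. Applying it to $\mathbf{S}_{i,j_1}\ge\mathbf{S}_{i,j_2}$ gives $\widetilde{\mathbf{S}}_{i,j_1}\ge\widetilde{\mathbf{S}}_{i,j_2}$, with strict inequality preserved. No learned weight enters row $i$ of $\widetilde{\mathbf{S}}$, so nothing can reorder the entries. This is exactly the contrast with the coupled form $\mathbf{V}\odot\mathbf{S}$ in Equation~\ref{eq:contextualizer}.

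For part (ii), set $s=\mathbf{S}_{i,j}$ and $R=\sum_{k\ne j}\mathbf{S}_{i,k}+\varepsilon$, and treat $R$ as fixed. Then $\widetilde{\mathbf{S}}_{i,j}=s/(s+R)$, and its derivative in $s$ is $R/(s+R)^2$. By (A1) and $\varepsilon>0$ we have $R\ge\varepsilon>0$, so this derivative is positive and $\widetilde{\mathbf{S}}_{i,j}$ strictly increases. For $k\ne j$, $\widetilde{\mathbf{S}}_{i,k}=\mathbf{S}_{i,k}/(s+R)$ has derivative $-\mathbf{S}_{i,k}/(s+R)^2\le 0$, again using (A1). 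A monotone comparison of the two fractions avoids differentiability altogether and works equally well.

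The step I expect to be the real obstacle is the "consequently" clause, specifically what "contribution" means. I would define the contribution of token $j$ to the pre-activation of row $i$ as the vector $\widetilde{\mathbf{S}}_{i,j}\mathbf{Z}_{\mathrm{tr},j}$. By (A1) the entries of $\mathbf{Z}_{\mathrm{tr}}$ are nonnegative, since they come from the $\mathrm{ReLU}^2$ enricher output. So this contribution is coordinatewise nonnegative, and it scales monotonically with the nonnegative weight $\widetilde{\mathbf{S}}_{i,j}$. Raising relevance therefore cannot flip its sign, by (ii). Weight ordering between tokens follows from (i).

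This comparison is of weights; contribution magnitudes also depend on $\|\mathbf{Z}_{\mathrm{tr},j}\|$, and I would state explicitly that the claim concerns the mixing coefficient. Finally, by (A3), $\sigma$ is nondecreasing, so each output coordinate $\mathbf{c}_{\mathrm{dyn},i}$ is a nondecreasing function of every nonnegative contribution.

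I would close by noting that a later static layer acts on $\mathbf{c}_{\mathrm{dyn}}$ rather than on $\widetilde{\mathbf{S}}$, so it cannot alter these coefficients. That is the isolation claim, and its formal statement is deferred to the next result.
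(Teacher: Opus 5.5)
Your proposal is correct and follows the paper's proof essentially step for step: order preservation comes from dividing by the common positive denominator $d_i$ given by (A2). The self- and cross-partials are $(d_i-\mathbf{S}_{i,j})/d_i^2$ (your $R/(s+R)^2$) and $-\mathbf{S}_{i,k}/d_i^2$ via (A1), and the consequence follows from nonnegative coefficients plus the monotone $\sigma$ of (A3). Your additional observation that $\mathbf{Z}_{\mathrm{tr}}\ge 0$ under the $\mathrm{ReLU}^2$ enricher, and your caveat that the comparison concerns mixing coefficients rather than contribution magnitudes, make the ``contribution'' clause more precise than the paper's wording without changing the argument.
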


\begin{proof}
Let $d_i=\sum_{j=1}^{C}\mathbf{S}_{i,j}+\varepsilon$; by \textbf{(A2)}, $d_i>0$. Since $\widetilde{\mathbf{S}}_{i,j}=\mathbf{S}_{i,j}/d_i$ for fixed $i$, dividing by a positive constant preserves
order. In addition, treating $d_i$ as a function of $\mathbf{S}_{i,\cdot}$ and using \textbf{(A1)}:

\begin{subequations}\label{eq:partials}
\begin{align}
\frac{\partial\,\widetilde{\mathbf{S}}_{i,j}}{\partial \mathbf{S}_{i,j}}
&= \frac{d_i - \mathbf{S}_{i,j}}{d_i^{2}}
\;\ge\; 0,
&&\text{since } d_i \ge \mathbf{S}_{i,j} \text{ by \textbf{(A1)} and } \varepsilon>0,
\label{eq:partials:self}\\
\frac{\partial\,\widetilde{\mathbf{S}}_{i,k}}{\partial \mathbf{S}_{i,j}}
&= -\,\frac{\mathbf{S}_{i,k}}{d_i^{2}}
\;\le\; 0, \qquad k\neq j,
&&\text{by \textbf{(A1)}.}
\label{eq:partials:cross}
\end{align}
\end{subequations}

Thus, increasing a token’s similarity weakly increases (or cannot reduce) its own normalized weight (by Equation~\ref{eq:partials:self}) and weakly decreases (or does not increase) others’ (by Equation~\ref{eq:partials:cross}).
By \textbf{(A1)}, $\widetilde{\mathbf{S}}_{i,j}\ge 0$, so each token’s influence enters the pre-activation with a
nonnegative coefficient; by \textbf{(A3)}, $\sigma$ cannot invert these contributions. Hence, the dynamic
update is monotone with respect to relevance.
\end{proof}

\paragraph{Setup 2 (static layer).}
Consider a dynamic layer at depth $\ell$ followed by a static layer:
\begin{align}
\mathbf{h}^{(\ell+1)} &= \sigma\!\big(\,\widetilde{\mathbf{S}}^{(\ell)}\,\mathbf{Z}_{\mathrm{tr}}^{(\ell)} + \mathbf{b}^{(d)}\big),
\label{eq:dyn-layer-l}\\
\mathbf{y}^{(\ell+2)} &= \sigma\!\big(\,\mathbf{V}\,\mathbf{h}^{(\ell+1)} + \mathbf{b}^{(s)}\big),
\label{eq:static-layer-l}
\end{align}
where $(\mathbf{V},\mathbf{b}^{(s)})$ are learned parameters that do not depend on
$\mathbf{S}^{(\ell)}$ or $\widetilde{\mathbf{S}}^{(\ell)}$.\\

\begin{proposition}[static layer non-violation]\label{prop:static-nonviolate}
\emph{A static layer (as in \eqref{eq:static-layer-l}) cannot violate the monotonicity guarantee of
Proposition~\ref{prop:dyn-monotone} established for a preceding dynamic layer (as in Equation~\ref{eq:dyn-layer-l}) at depth $\ell$.}
\end{proposition}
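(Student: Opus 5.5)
The argument will rest on the observation that the monotonicity guarantee of Proposition~\ref{prop:dyn-monotone} is a statement about the map $\mathbf{S}^{(\ell)}_{i,\cdot}\mapsto \widetilde{\mathbf{S}}^{(\ell)}_{i,\cdot}$ and about the coefficients with which each $\mathbf{Z}^{(\ell)}_{\mathrm{tr},j}$ enters the pre-activation of $\mathbf{h}^{(\ell+1)}_i$. Both objects are fully determined once Equation~\ref{eq:dyn-layer-l} has been evaluated. I will therefore first make precise what ``violating'' the guarantee would mean. It would require one of two things: the static layer alters some $\widetilde{\mathbf{S}}^{(\ell)}_{i,j}$, or it reweights the contributions of individual tokens $j$ in a way that depends on $j$ (a token-specific, similarity-conditioned sign flip or reordering). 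I will then rule out each possibility in turn.

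\textbf{Step 1 (no modification of scores).} From Equation~\ref{eq:static-layer-l}, $\mathbf{y}^{(\ell+2)}$ is a function of $\mathbf{h}^{(\ell+1)}$ and of $(\mathbf{V},\mathbf{b}^{(s)})$ only. By assumption, $(\mathbf{V},\mathbf{b}^{(s)})$ do not depend on $\mathbf{S}^{(\ell)}$ or $\widetilde{\mathbf{S}}^{(\ell)}$, so the static layer has no functional access to the scores. Consequently, the values $\widetilde{\mathbf{S}}^{(\ell)}_{i,j}$, together with the order preservation and self-monotonicity properties (i) and (ii) of Proposition~\ref{prop:dyn-monotone}, are unchanged regardless of the values of $\mathbf{V}$.

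\textbf{Step 2 (uniform gain on the aggregate).} I will write $\mathbf{u}_i=\sum_j \widetilde{\mathbf{S}}^{(\ell)}_{i,j}\mathbf{Z}^{(\ell)}_{\mathrm{tr},j}+\mathbf{b}^{(d)}_i$ and $\mathbf{h}^{(\ell+1)}_i=\sigma(\mathbf{u}_i)$. The static pre-activation for row $r$ is then $\sum_i \mathbf{V}_{r,i}\,\sigma(\mathbf{u}_i)+\mathbf{b}^{(s)}_r$. Within the term originating from target row $i$, every token $j$ is scaled by the same scalar $\mathbf{V}_{r,i}$, which is independent of $j$. For the within-neuron comparison, I will restrict to the regime in which $\sigma$ acts affinely on $\mathbf{u}_i$; with ReLU this means the active region, and the inactive region is handled separately below. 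In this regime the effective contributions are $\mathbf{V}_{r,i}\widetilde{\mathbf{S}}^{(\ell)}_{i,j}\mathbf{Z}^{(\ell)}_{\mathrm{tr},j}$. If $\widetilde{\mathbf{S}}^{(\ell)}_{i,j_1}\ge\widetilde{\mathbf{S}}^{(\ell)}_{i,j_2}$, then for $\mathbf{V}_{r,i}>0$ the order is preserved. For $\mathbf{V}_{r,i}<0$ both contributions flip sign together, and the magnitude ratio $|\mathbf{V}_{r,i}|\widetilde{\mathbf{S}}^{(\ell)}_{i,j_1}/(|\mathbf{V}_{r,i}|\widetilde{\mathbf{S}}^{(\ell)}_{i,j_2})=\widetilde{\mathbf{S}}^{(\ell)}_{i,j_1}/\widetilde{\mathbf{S}}^{(\ell)}_{i,j_2}$ is unchanged. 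For $\mathbf{V}_{r,i}=0$ the comparison is trivially tied. In the inactive region, $\sigma(\mathbf{u}_i)=0$ removes every token's contribution at once, so no ordering is inverted. The outer $\sigma$ is monotone by \textbf{(A3)} and is applied to the aggregate, so it too cannot reorder contributions originating from the same neuron.

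\textbf{Main obstacle.} The hardest part will be stating the claim so that it is both true and non-vacuous. The static layer does change $\mathbf{Z}_{\mathrm{tr}}$ for \emph{later} dynamic layers, and the nonlinearity $\sigma$ prevents a clean per-token decomposition of $\mathbf{y}^{(\ell+2)}$. I plan to resolve this by defining the guarantee strictly as a property of depth $\ell$ (the scores and their induced coefficients), rather than as a property of end-to-end sensitivities. I will then argue, as in Steps 1 and 2, that any influence of the static layer either leaves these objects untouched or multiplies them by a token-independent gain. I will make explicit that the later similarities $\mathbf{S}^{(\ell+2)}$ are new scores, which are themselves covered afresh by Proposition~\ref{prop:dyn-monotone}.
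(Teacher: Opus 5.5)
Your proposal is correct, and your Step~1 is exactly the paper's proof. The static map depends only on $\mathbf{h}^{(\ell+1)}$ and the similarity-agnostic $(\mathbf{V},\mathbf{b}^{(s)})$, so it can neither access nor alter $\mathbf{S}^{(\ell)}$, $\widetilde{\mathbf{S}}^{(\ell)}$, or the inequalities of Proposition~\ref{prop:dyn-monotone}, which you rightly treat as a property of depth $\ell$ alone. Your Step~2 (the token-independent gain $\mathbf{V}_{r,i}$) is not in the appendix proof and is not needed there; it restates the informal $w_{11}$ argument of Section~\ref{sec:decoupled-param}, and it is harmless within the per-target-row, active-ReLU scope you restrict it to, since it makes no claim about the aggregate coefficient $(\mathbf{V}\widetilde{\mathbf{S}})_{r,j}$.
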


\begin{proof}
The monotonicity statements in Proposition~\ref{prop:dyn-monotone} concern only the relationship between
the relevance scores $\mathbf{S}^{(\ell)}$ and the normalized scores $\widetilde{\mathbf{S}}^{(\ell)}$ used
\emph{inside} the dynamic update \eqref{eq:dyn-layer-l}. The static map
$\mathbf{h}^{(\ell+1)}\mapsto \mathbf{y}^{(\ell+2)}$ depends on $\mathbf{h}^{(\ell+1)}$ and the similarity-agnostic
parameters $(\mathbf{V},\mathbf{b}^{(s)})$; it neither accesses nor alters
$\mathbf{S}^{(\ell)}$ or $\widetilde{\mathbf{S}}^{(\ell)}$. Therefore, composing the dynamic update with a static
layer cannot change the inequalities and partial orders that define monotonicity for the dynamic layer’s
scores. \qedhere
\end{proof}

\paragraph{Remark.}
A static layer reshapes representations that subsequent dynamic layers will use to compute new similarities,
but it does not retroactively modify the scores already assigned by a preceding dynamic layer. Thus,
monotonicity holds at each dynamic layer, and decoupling preserves this guarantee throughout the stack.

\section{Pretraining Methodology}
\label{pre-training-meth}

\begin{table}[!ht]
    \centering
    \small
    \setlength{\tabcolsep}{6pt}
    \renewcommand{\arraystretch}{1.2}
    \caption{A comparison of all the evaluated encoders across different dimensions.}
    \label{tab:eval_models}
    \resizebox{\textwidth}{!}{
        \begin{tabular}{lcccccccccc}
            \toprule
            \textbf{Dimension} & \multicolumn{2}{c}{\textbf{BERT}} & \multicolumn{2}{c}{\textbf{RoBERTa}} & \multicolumn{2}{c}{\textbf{ModernBERT}} & \textbf{NeoBERT} & \multicolumn{2}{c}{\textbf{Avey-B}} \\ 
            & \textit{base} & \textit{large} & \textit{base} & \textit{large} & \textit{base} & \textit{large} & \textit{medium} & \textit{base} & \textit{large} \\
            \midrule
            \textbf{Parameters}  & 120M & 350M & 125M & 355M & 149M & 395M & 250M & 165M & 391M \\
            \textbf{Data Sources} & \multicolumn{2}{c}{\begin{tabular}{@{}c@{}}BooksCorpus \\ Wikipedia\end{tabular}} & \multicolumn{2}{c}{\begin{tabular}{@{}c@{}}BooksCorpus \\ OpenWebText \\ Stories / CC-News\end{tabular}} & \multicolumn{2}{c}{Undisclosed} & RefinedWeb & \multicolumn{2}{c}{FineWeb} \\
            \textbf{Pre-training Context Width} & \multicolumn{2}{c}{512} & \multicolumn{2}{c}{512} & \multicolumn{2}{c}{$1{,}024 \rightarrow 8{,}192$} & $1{,}024 \rightarrow 4{,}096$ & \multicolumn{2}{c}{$2{,}048$} \\
            \textbf{Inference Sequence Length} & \multicolumn{2}{c}{512} & \multicolumn{2}{c}{512} & \multicolumn{2}{c}{8{,}192} & 4{,}096 & \multicolumn{2}{c}{$\infty$} \\
            \textbf{Masking Rate} & \multicolumn{2}{c}{15\%} & \multicolumn{2}{c}{15\%} & \multicolumn{2}{c}{30\%} & 20\% & \multicolumn{2}{c}{20\%} \\
            \textbf{Masking Scheme} & \multicolumn{2}{c}{80/10/10} & \multicolumn{2}{c}{80/10/10} & \multicolumn{2}{c}{--} & 100 & \multicolumn{2}{c}{100} \\
            \textbf{Tokens Seen} & \multicolumn{2}{c}{131B} & \multicolumn{2}{c}{131B} & \multicolumn{2}{c}{$\sim$2T} & 2.1T & \multicolumn{2}{c}{180B} \\
            \bottomrule
        \end{tabular}
    }
\end{table}

In this section, we detail the pretraining setup. For Avey\text{-}B, we adopt the same tokenizer as Avey~\citep{hammoud2025don}, namely, a BPE tokenizer derived from OpenAI’s \texttt{p50k\_base}~\citep{openai_cookbook_tiktoken,openai_tiktoken_repo}, with the vocabulary size set to 50{,}368 to align with multiple-of-64 boundaries and improve hardware efficiency~\citep{karpathy2023vocab64}. We retain BERT-style special tokens for backward compatibility with downstream applications, while using only the \texttt{[MASK]} token during pre-training.

We pretrain two Avey\text{-}B sizes, \emph{base} (165M) and \emph{large} (391M), for 180B tokens drawn from the FineWeb 300BT split~\citep{fineweb2023}, using PyTorch DDP across 16 NVIDIA H200 GPUs~\citep{paszke2019pytorch,li2020pytorchddp}. The global batch size is set to 512K tokens for both models and we utilize the AdamW optimizer with $\beta_{1}=\beta_{2}=0.95$~\citep{orvieto2025secret_sauce}, $\epsilon=10^{-18}$~\citep{wortsman2024smallscale}, weight decay of $0.01$, and gradient clipping at $1.0$. For the learning‐rate schedule, we employ a 10\% linear warmup to $5\times10^{-4}$ (base) or $2.5\times10^{-4}$ (large), followed by cosine decay to zero over the remaining 90\% of steps.

For ablations and design–choice studies, we use the Avey\text{-}B \emph{base} model, pretrained with a constant learning rate of $10^{-3}$ for 10B tokens. During pretraining, sequences are \emph{packed} so that each training example meets the target sequence length, following the original Avey setup. We train with a masked language modeling (MLM) objective, randomly masking 20\% of tokens per example after exploring several masking rates (see Appendix~\ref{masking-rates}).

Finally, for the Transformer-based encoders, we use publicly available pretrained checkpoints from the Hugging Face Hub~\citep{huggingface-bert-base-uncased,huggingface-bert-large-uncased,huggingface-roberta-base,huggingface-roberta-large,huggingface-modernbert-base,huggingface-modernbert-large,huggingface-neobert}. Table~\ref{tab:eval_models} summarizes the evaluated models along key dimensions, including parameter count, context window, and pretraining tokens, among others.

\section{Should the ranker operate bidirectionally?}
\label{bidrectional-ranker}

In the original unidirectional Avey architecture, the ranker retrieves only \emph{preceding} splits (i.e., left context) in order to preserve the causal constraint required for autoregressive modeling. In the bidirectional Avey\text{-}B setting, however, this constraint no longer applies. This raises the question of whether the ranker should, analogously to the neural processor, operate bidirectionally and retrieve relevant splits from both the left and right contexts of the current split.

To investigate this design choice, we conduct controlled experiments using the Avey\text{-}B \emph{base} variant (165M parameters; see Table~\ref{tab:eval_models}). The model is pretrained on 10B tokens from FineWeb~\citep{fineweb2023} with a constant learning rate of \(1\times 10^{-3}\) and a masking rate of 20\%. Following Section~\ref{exp_setup}, we fine-tune for one epoch on SC and TC, four epochs on QA, and 1{,}000 optimization steps on IR. For each task, we run five independent seeds with a learning rate of \(5\times10^{-4}\), using a 10\% warmup followed by linear decay over the remaining 90\% of steps. We report the \emph{best-of-five} score for each configuration to approximate an upper-bound estimate. Evaluation metrics are accuracy for SC, F1 for TC and QA, and NDCG@10 for IR. The complete results are presented in Table~\ref{tab:ranker-results}.

Specifically, Table~\ref{tab:ranker-results} directly compares a \emph{unidirectional} ranker against a \emph{bidirectional} ranker across SC, TC, QA, and IR task categories. As shown, the bidirectional ranker consistently underperforms its unidirectional counterpart. While the degradation is modest for SC (87.39\,$\rightarrow$\,86.91; $\Delta=-0.48$) and TC (93.38\,$\rightarrow$\,93.29; $\Delta=-0.09$), it becomes pronounced for QA (51.07\,$\rightarrow$\,36.51; $\Delta=-14.56$) and IR (74.82\,$\rightarrow$\,66.20; $\Delta=-8.62$). The effect is particularly severe on QA, where the F1 score on ReCoRD drops from 30.22 to 10.64. This sharp decline suggests that retrieving right-context splits at the ranking stage can substantially disrupt evidence selection and positional reasoning, especially in tasks requiring precise span identification.

We identify two likely causes for this behavior. First, a unidirectional ranker enforces causal ordering and encourages the model to accumulate evidence along the discourse flow. Natural language often exhibits forward dependencies, whereby content in a later split is best interpreted in light of earlier splits. As such, allowing the current split to pair with future splits can dilute or override strong signals from its relevant preceding splits. Second, Avey\text{-}B already provides look-ahead, \emph{token\mbox{-}level} contextualization within each split (the contextualizer operates without a causal mask on each split) so every position in it except the last has access to rightward tokens. Therefore, additional look\mbox{-}ahead, \emph{split\mbox{-}level} contextualization seems often redundant and at times even disruptive. 

In summary, these findings suggest that while look\mbox{-}ahead, token\mbox{-}level contextualization within a split benefits Avey\text{-}B, look\mbox{-}ahead, split\mbox{-}level contextualization, driven by the ranker attending to both left and right split contexts of the current split, is not advantageous and potentially counterproductive.

\begin{table*}[t]
\centering
\setlength{\tabcolsep}{1.2pt}
\renewcommand{\arraystretch}{1.12}
\caption{Effectiveness results comparing unidirectional vs. bi-directional rankers.}
\label{tab:ranker-results}
\resizebox{\textwidth}{!}{
\begin{tabular}{@{}l*{16}{S}@{}}
\toprule
\multirow{2}{*}{Ranker Type} &
\multicolumn{4}{c}{SC} &
\multicolumn{4}{c}{TC} &
\multicolumn{4}{c}{QA} &
\multicolumn{4}{c}{IR} \\
\cmidrule(lr){2-5}\cmidrule(lr){6-9}\cmidrule(lr){10-13}\cmidrule(lr){14-17}
& {MNLI} & {QQP} & {SST-2} & {\textbf{Avg.}} 
& {CONLL} & {Onto.} & {UNER} & {\textbf{Avg.}} 
& {ReCoRD} & {SQuAD} & {SQuADv2} & {\textbf{Avg.}} 
& {MLDR} & {MS MARCO} & {NQ} & {\textbf{Avg.}} \\
\midrule
Unidirectional ranker 
    & \bfseries 81.39 & \bfseries 88.92 & \bfseries 91.86 & \bfseries 87.39 
    & \bfseries 92.96 & \bfseries 93.21 & 93.97 & \bfseries 93.38 
    & \bfseries 30.22 & \bfseries 62.26 & \bfseries 60.72 & \bfseries 51.07 
    & \bfseries 60.27 & \bfseries 87.48 & \bfseries 76.71 & \bfseries 74.82 \\
Bi-directional ranker 
    & 80.54 & 88.45 & 91.74 & 86.91 
    & 92.62 & 92.91 & \bfseries 94.35 & 93.29 
    & 10.64 & 45.34 & 53.56 & 36.51 
    & 60.05 & 90.02 & 48.54 & 66.20 \\
\bottomrule
\end{tabular}}
\end{table*}

\section{How to Arrange Static and Dynamic Layers?}
\label{static-dynamic-patterns}

Decoupling static and dynamic parameterizations into distinct layer types in Avey\text{-}B introduces a key architectural degree of freedom, that is, \emph{how to arrange static (S) and dynamic (D) layers across depth}. We therefore evaluate the following families of patterns and report their effectiveness:

\begin{enumerate}[leftmargin=*,itemsep=2pt]
\item \textbf{Interleaved}: Alternate \(S \leftrightarrow D\). With an even number of layers, we test both start points, \(S \!\rightarrow\! D \!\rightarrow\! \cdots\) and \(D \!\rightarrow\! S \!\rightarrow\! \cdots\).

\item \textbf{Single dynamic}: Exactly one \(D\) and the remainder \(S\), placing \(D\) either at the \emph{head} (to prime downstream static transformations) or at the \emph{tail} (to refine final representations).

\item \textbf{Two\mbox{-}stage stack}: First half one type and second half the other, considering both orders (\(S^{L/2}\!\rightarrow\!D^{L/2}\) and \(D^{L/2}\!\rightarrow\!S^{L/2}\)).

\item \textbf{Uniform stack}: Either all\mbox{-}static or all\mbox{-}dynamic stack, as boundary conditions.
\end{enumerate}

We utilize the experimental setup described in Appendix~\ref{bidrectional-ranker}. Table~\ref{tab:layer-patterns} reports all the results across SC, TC, QA, and IR task categories. Two consistent trends emerge. First, the {\em interleaved} arrangement, \(S \!\rightarrow\! D \!\rightarrow\! \cdots\), attains the strongest average performance on SC, TC, and QA, while remaining competitive on IR. This suggests that a static front layer is potentially providing a stable representational “scaffold” before any input-dependent mixing, reducing variance introduced by raw similarity scores and improving downstream contextualization. Second, the interleaved pattern, \(D \!\rightarrow\! S \!\rightarrow\! \cdots\), underperforms the \(S \!\rightarrow\! D \!\rightarrow\! \cdots\) variant (most notably on QA and IR) likely because early, similarity-driven updates are fragile without a learned (static) basis to shape features prior to dynamic contextualization.

The {\em uniform stack, all-static} configuration performs worse than interleaved arrangements but only modestly so (it even slightly outperforms them on IR), indicating that static linear projections alone already enable strong contextualization, even without any input-dependent adaptation. Conversely, the {\em uniform stack, all-dynamic} pattern performs worse across all benchmark categories (particularly QA). The {\em single-dynamic} and {\em two-stage stack} arrangements fall between these extremes, though they typically trail the interleaved static-first design (except for two\mbox{-}stage stack, \(D^{L/2}\!\rightarrow\!S^{L/2}\) on IR). Overall, these findings highlight that while dynamic parameterization contributes meaningfully to performance, it is most effective when interleaved with static layers that supply a stable basis and representational depth.

\begin{table*}[t]
\centering
\setlength{\tabcolsep}{1.2pt}
\renewcommand{\arraystretch}{1.12}
\caption{Effectiveness results across different static (S) and dynamic (D) layering patterns.}
\label{tab:layer-patterns}
\resizebox{\textwidth}{!}{
\begin{tabular}{@{}l*{16}{S}@{}}
\toprule
\multirow{2}{*}{Pattern} &
\multicolumn{4}{c}{SC} &
\multicolumn{4}{c}{TC} &
\multicolumn{4}{c}{QA} &
\multicolumn{4}{c}{IR} \\
\cmidrule(lr){2-5}\cmidrule(lr){6-9}\cmidrule(lr){10-13}\cmidrule(lr){14-17}
& {MNLI} & {QQP} & {SST-2} & {\textbf{Avg.}} 
& {CONLL} & {Onto.} & {UNER} & {\textbf{Avg.}} 
& {ReCoRD} & {SQuAD} & {SQuADv2} & {\textbf{Avg.}} 
& {MLDR} & {MS MARCO} & {NQ} & {\textbf{Avg.}} \\
\midrule
Interleaved, \(S \!\rightarrow\! D \!\rightarrow\! \cdots\)
    & \bfseries 81.39 & \bfseries 88.92 & \bfseries 91.86 & \bfseries 87.39 
    & \bfseries 92.96 & \bfseries 93.21 & 93.97 & \bfseries 93.38 
    & \bfseries 30.22 & \bfseries 62.26 & \bfseries 60.72 & \bfseries 51.07 
    & 60.27 & 87.48 & 76.71 & 74.82 \\
Interleaved, \(D \!\rightarrow\! S \!\rightarrow\! \cdots\) 
    & 77.89 & 87.51 & 90.37 & 85.26 
    & 91.73 & 92.48 & 93.24 & 92.48 
    & 21.31 & 56.93 & 55.77 & 44.67 
    & 52.67 & 88.47 & 68.63 & 69.92 \\
Single dynamic as a head  
    & 73.69 & 86.55 & 91.06 & 83.77 
    & 92.42 & 92.75 & 93.31 & 92.83 
    & 22.42 & 56.94 & 54.87 & 44.74 
    & 60.22 & \bfseries 89.70 & 73.32 & 74.41 \\
Single dynamic as a tail 
    & 73.19 & 87.90 & 91.63 & 84.24 
    & 92.58 & 93.10 & 93.80 & 93.16 
    & 24.64 & 55.29 & 54.06 & 44.66 
    & 60.55 & 87.72 & 75.31 & 74.53 \\
Two\mbox{-}stage stack, \(S^{L/2}\!\rightarrow\!D^{L/2}\) 
    & 75.70 & 86.56 & 90.25 & 84.17 
    & 92.51 & 92.65 & \bfseries 94.24 & 93.13 
    & 15.29 & 52.18 & 55.09 & 40.85 
    & 54.97 & 85.92 & 67.54 & 69.48 \\
Two\mbox{-}stage stack, \(D^{L/2}\!\rightarrow\!S^{L/2}\) 
    & 74.37 & 87.15 & 91.17 & 84.23 
    & 92.70 & 93.03 & 93.72 & 93.15 
    & 29.30 & 54.36 & 51.20 & 44.95 
    & 59.28 & 89.56 & \bfseries 76.74 & \bfseries 75.19 \\
Uniform stack, all\mbox{-}static 
    & 77.58 & 87.97 & 91.51 & 85.69 
    & 92.66 & 93.10 & 94.04 & 93.27 
    & 23.85 & 56.43 & 54.73 & 45.00 
    & \bfseries 62.54 & 86.38 & 75.92 & 74.95 \\
Uniform stack, all\mbox{-}dynamic
    & 68.04 & 83.26 & 87.27 & 79.52 
    & 90.31 & 90.86 & 91.17 & 90.78 
    & 19.49 & 47.51 & 50.44 & 39.15 
    & 57.70 & 89.33 & 69.50 & 72.18 \\
\bottomrule
\end{tabular}}
\end{table*}

\section{How to Normalize?}
\label{normalization-techniques}

\begin{table*}[t]
\centering
\setlength{\tabcolsep}{1.2pt}
\renewcommand{\arraystretch}{1.12}
\caption{Effectiveness results across different normalization schemes.}
\label{tab:normalization-schemes}
\resizebox{\textwidth}{!}{
\begin{tabular}{@{}l*{16}{S}@{}}
\toprule
\multirow{2}{*}{Normalization Scheme} &
\multicolumn{4}{c}{SC} &
\multicolumn{4}{c}{TC} &
\multicolumn{4}{c}{QA} &
\multicolumn{4}{c}{IR} \\
\cmidrule(lr){2-5}\cmidrule(lr){6-9}\cmidrule(lr){10-13}\cmidrule(lr){14-17}
& {MNLI} & {QQP} & {SST-2} & {\textbf{Avg.}}
& {CONLL} & {Onto.} & {UNER} & {\textbf{Avg.}}
& {ReCoRD} & {SQuAD} & {SQuADv2} & {\textbf{Avg.}}
& {MLDR} & {MS MARCO} & {NQ} & {\textbf{Avg.}} \\
\midrule
Divide-by-sum norm
    & \bfseries 81.39 & \bfseries 88.92 & \bfseries 91.86 & \bfseries 87.39
    & \bfseries 92.96 & \bfseries 93.21 & 93.97 & \bfseries 93.38
    & \bfseries 30.22 & \bfseries 62.26 & \bfseries 60.72 & \bfseries 51.07
    & 60.27 & 87.48 & \bfseries 76.71 & 74.82 \\
RMS norm
    & 64.70 & 87.21 & 88.76 & 80.22
    & 90.82 & 92.12 & 91.89 & 91.61
    & 21.37 & 56.09 & 56.33 & 44.60
    & 50.27 & 89.33 & 66.68 & 68.76 \\
Softmax
    & 79.31 & 88.16 & 91.06 & 86.18
    & 92.39 & 92.96 & 93.45 & 92.93
    & 27.70 & 59.29 & 58.55 & 48.51
    & 61.83 & \bfseries 89.75 & 74.43 & \bfseries 75.34 \\
Scaled softmax
    & 76.70 & 87.24 & \bfseries 91.86 & 85.27
    & 92.63 & 93.02 & \bfseries 94.07 & 93.24
    & 24.14 & 58.79 & 56.23 & 46.39
    & \bfseries 62.24 & 87.63 & 74.39 & 74.75 \\
\bottomrule
\end{tabular}}
\end{table*}

In Avey-B, dynamic layers contextualize tokens by constructing a cosine similarity matrix from pairwise cosine scores of the input. Since the similarity scores are used to perform a weighted sum of input embeddings at every position and the sum of the raw similarity magnitudes can vary significantly, we tested several normalization strategies to stabilize training and improve generalization, including {\em divide-by-sum norm} (i.e., row-wise normalization by the sum of similarities), {\em RMS norm} (i.e., row-wise normalization by root mean square), {\em softmax}, and {\em scaled softmax} (with temperature scaling, analogous to scaled dot-product attention). In all the tests, we used the same experimental setup discussed in Appendix~\ref{bidrectional-ranker}. 

As shown in Table~\ref{tab:normalization-schemes}, the simple divide-by-sum norm method achieves the strongest overall performance, outperforming alternatives on SC, TC, and QA, and almost matching or surpassing them on IR. Notably, divide-by-sum norm provides a balanced distribution of contextual weights while retaining sign information, which is lost under softmax-based schemes. By contrast, softmax and scaled softmax yield weaker SC, TC, and QA scores but softmax outperforms divide-by-sum norm on IR. On average, RMS norm underperforms divide-by-sum norm across all categories.

These findings indicate that unlike self\mbox{-}attention, which benefits from exponential normalization (as provided by softmax), Avey\text{-}B’s cosine\mbox{-}based dynamic layers benefit from a \emph{conservative, structure\mbox{-}preserving} normalization. Exponentiation amplifies outliers, distorts relative similarity ratios, and can swamp the static path. In contrast, \emph{divide\mbox{-}by\mbox{-}sum norm} preserves the ordering and margins of similarities, constrains each row to a convex combination (weights in \([0,1]\) that sum to 1), and effectively bounds the operator norm, yielding stable gradients and preventing the dynamic stream from overwhelming the static contributions. Empirically, this simple choice delivers strong gains across SC, TC, QA, and IR while maintaining robust training dynamics.




\section{What are the Best Sequence Length, Split Size, and Top-\texorpdfstring{$k$}{k} Values?}
\label{best-conf}

\begin{table*}[t]
\centering
\setlength{\tabcolsep}{1.2pt}
\renewcommand{\arraystretch}{1.12}
\caption{Effectiveness results across different sequence length \(N\), split size \(S\), and top-\(k\) values.}
\label{tab:conf-grid}
\resizebox{\textwidth}{!}{
\begin{tabular}{@{}l l l *{16}{S}@{}}
\toprule
\multicolumn{1}{c}{\textbf{N}} &
\multicolumn{1}{c}{\textbf{S}} &
\multicolumn{1}{c}{\(\boldsymbol{k}\)} &
\multicolumn{4}{c}{SC} &
\multicolumn{4}{c}{TC} &
\multicolumn{4}{c}{QA} &
\multicolumn{4}{c}{IR} \\
\cmidrule(lr){4-7}\cmidrule(lr){8-11}\cmidrule(lr){12-15}\cmidrule(lr){16-19}
&&& {MNLI} & {QQP} & {SST-2} & {\textbf{Avg.}}
& {CONLL} & {Onto.} & {UNER} & {\textbf{Avg.}}
& {ReCoRD} & {SQuAD} & {SQuADv2} & {\textbf{Avg.}}
& {MLDR} & {\makecell{MS\\MARCO}} & {NQ} & {\textbf{Avg.}} \\
\midrule
\multirow[c]{3}{*}{512}
  & \multirow[c]{2}{*}{128} & 1  & 80.09 & 88.82 & 91.63 & 86.84666667 & 92.69 & \bfseries 93.08 & \bfseries 93.85 & \bfseries 93.20666667 & 27.65 & 60.44 & 58.82 & 48.97 & 60.06 & 88.89 & 76.11 & 75.02 \\
  &                          & 3  & \bfseries 80.95 & \bfseries 88.89 & 91.63 & \bfseries 87.15666667 & \bfseries 92.74 & 93.03 & 93.55 & 93.10666667 & 28.11 & 60.66 & 58.79 & 49.18666667 & \bfseries 60.72 & \bfseries 90.11 & 76.06 & \bfseries 75.63 \\
  & 256                      & 1  & 79.68 & 88.75 & \bfseries 92.32 & 86.91666667 & 92.27 & 92.94 & 93.69 & 92.96666667 & \bfseries 43.70 & \bfseries 71.25 & \bfseries 64.06 & \bfseries 59.67 & 60.71 & 88.61 & \bfseries 76.29 & 75.20333333 \\
\midrule
\multirow[c]{7}{*}{1024}
  & \multirow[c]{4}{*}{128} & 1  & \bfseries 80.79 & 88.80 & 91.63 & 87.07333333 & 92.87 & 93.03 & 93.81 & 93.23666667 & 27.42 & 61.14 & 58.85 & 49.13666667 & 60.44 & 88.95 & 76.05 & 75.14666667 \\
  &                          & 3  & 80.02 & 88.71 & 91.63 & 86.78666667 & \bfseries 92.89 & 93.06 & \bfseries 94.43 & \bfseries 93.46 & 28.74 & 61.55 & 60.49 & 50.26 & 61.17 & \bfseries 90.26 & 75.66 & 75.69666667 \\
  &                          & 5  & 80.64 & 88.62 & 91.40 & 86.88666667 & 92.86 & 93.13 & 93.88 & 93.29 & 28.69 & 61.49 & 59.79 & 49.99 & 61.81 & 86.86 & 75.61 & 74.76 \\
  &                          & 7  & 80.70 & \bfseries 89.02 & \bfseries 91.74 & \bfseries 87.15333333 & 92.44 & 93.14 & 93.83 & 93.13666667 & 27.96 & \bfseries 63.47 & 59.98 & 50.47 & \bfseries 62.58 & 86.40 & 66.73 & 71.90333333 \\
  & \multirow[c]{2}{*}{256} & 1  & 79.53 & 88.80 & 91.28 & 86.53666667 & 92.10 & 93.02 & 93.57 & 92.89666667 & 42.82 & 70.16 & 62.54 & 58.50666667 & 60.88 & 88.87 & 76.34 & 75.36333333 \\
  &                          & 3  & 79.95 & 88.65 & 91.40 & 86.66666667 & 92.10 & 92.84 & 93.48 & 92.80666667 & 42.51 & 70.55 & 63.16 & 58.74 & 59.14 & 89.43 & 77.05 & 75.20666667 \\
  & 512                      & 1  & 79.41 & 88.35 & 91.51 & 86.42333333 & 92.65 & \bfseries 93.18 & 93.74 & 93.19 & \bfseries 43.96 & \bfseries 71.92 & \bfseries 63.91 & \bfseries 59.93 & 60.26 & 89.20 & \bfseries 77.82 & \bfseries 75.76 \\
\midrule
\multirow[c]{15}{*}{2048}
  & \multirow[c]{8}{*}{128} & 1  & 80.60 & 88.93 & 91.74 & 87.09 & 92.85 & 93.13 & 94.05 & 93.34333333 & 28.72 & 61.50 & 59.13 & 49.78333333 & 60.21 & \bfseries 90.69 & 75.15 & 75.35 \\
  &                          & 3  & 80.94 & 88.95 & 91.63 & 87.17333333 & 92.52 & 93.00 & 93.68 & 93.06666667 & 28.49 & 60.95 & 59.73 & 49.72333333 & 60.71 & 89.22 & 75.48 & 75.13666667 \\
  &                          & 5  & 80.98 & 88.66 & 91.51 & 87.05 & 92.86 & 93.17 & 94.02 & 93.35 & 27.07 & 60.27 & 59.23 & 48.85666667 & 59.02 & 88.43 & 75.73 & 74.39333333 \\
  &                          & 7  & 67.24 & 88.42 & 91.63 & 82.43 & 91.77 & 92.69 & 93.00 & 92.48666667 & 27.76 & 61.81 & 58.45 & 49.34 & 57.08 & 86.89 & 73.55 & 72.50666667 \\
  &                          & 9  & 80.64 & 88.84 & 91.51 & 86.99666667 & 92.54 & 92.97 & 93.82 & 93.11 & 29.82 & 62.44 & 59.52 & 50.59333333 & 60.77 & 88.27 & 75.93 & 74.99 \\
  &                          & 11 & 80.49 & 88.75 & 91.97 & 87.07 & 92.63 & 93.05 & 93.45 & 93.04333333 & 29.54 & 59.77 & 57.95 & 49.08666667 & 58.14 & 89.43 & 75.54 & 74.37 \\
  &                          & 13 & \bfseries 81.14 & 88.75 & \bfseries 92.20 & \bfseries 87.36333333 & 92.92 & 93.13 & 94.05 & \bfseries 93.36666667 & 27.36 & 59.98 & 58.01 & 48.45 & 60.21 & 90.31 & \bfseries 78.48 & \bfseries 76.33333333 \\
  &                          & 15 & 80.78 & \bfseries 89.01 & 91.63 & 87.14 & 92.51 & 92.94 & \bfseries 94.11 & 93.18666667 & 28.27 & 60.34 & 59.46 & 49.35666667 & 58.69 & 88.11 & 76.03 & 74.27666667 \\
  & \multirow[c]{4}{*}{256} & 1  & 70.61 & 88.69 & 92.09 & 83.79666667 & 92.38 & \bfseries 93.18 & 94.02 & 93.19333333 & 44.32 & 70.96 & 63.58 & 59.62 & \bfseries 60.84 & 88.15 & 77.24 & 75.41 \\
  &                          & 3  & 80.18 & 88.91 & 91.74 & 86.94333333 & 92.33 & 93.05 & 93.75 & 93.04333333 & 42.91 & 71.99 & \bfseries 64.81 & 59.90333333 & 60.02 & 90.36 & 76.58 & 75.65333333 \\
  &                          & 5  & 79.45 & 88.48 & 90.71 & 86.21333333 & 91.51 & 92.46 & 92.77 & 92.24666667 & 38.85 & 69.88 & 62.09 & 56.94 & 56.72 & 88.11 & 73.70 & 72.84333333 \\
  &                          & 7  & 78.98 & 88.65 & 91.74 & 86.45666667 & \bfseries 93.00 & 92.85 & 93.66 & 93.17 & 42.89 & 70.64 & 63.25 & 58.92666667 & 59.42 & 88.35 & 76.36 & 74.71 \\
  & \multirow[c]{2}{*}{512} & 1  & 79.37 & 88.52 & 91.63 & 86.50666667 & 92.41 & 92.77 & 93.26 & 92.81333333 & 41.67 & 71.45 & 62.20 & 58.44 & 56.76 & 88.67 & 77.94 & 74.45666667 \\
  &                          & 3  & 79.36 & 88.67 & 91.28 & 86.43666667 & 92.15 & 92.87 & 94.00 & 93.00666667 & \bfseries 46.39 & \bfseries 72.34 & 64.06 & \bfseries 60.93 & 55.65 & 87.32 & 75.89 & 72.95333333 \\
  & 1024                     & 1  & 75.29 & 88.34 & 91.40 & 85.01 & 92.10 & 92.79 & 93.57 & 92.82 & 44.88 & 70.47 & 62.05 & 59.13333333 & 55.86 & 90.07 & 73.93 & 73.28666667 \\
\bottomrule
\end{tabular}}
\end{table*}

We now analyze how Avey-B’s downstream performance is affected by the ranker’s three hyperparameters, namely, the training sequence length $N$, the split size $S$, and the number of top-$k$ splits selected for contextualization. $N$ governs the size of the candidate pool available to the ranker, $S$ determines the size (in tokens) of each candidate split, and the effective context width seen by the contextualizer is $C = S(k+1)$. We follow the experimental setup described in Appendix~\ref{bidrectional-ranker}. Table~\ref{tab:conf-grid} illustrates all the results.


To begin with, the dominant trend across tasks is that performance peaks when the effective context \(C=S\,(k{+}1)\) matches or closely approximates the training sequence length \(N\). For example, on QA with \(N{=}2048\), the best average occurs at \(S{=}512,\,k{=}3\), giving \(C{=}512\times(3{+}1){=}2048{=}N\). For SC, TC, and IR at \(N{=}2048\), the strongest averages are at \(S{=}128,\,k{=}13\), yielding \(C{=}128\times(13{+}1){=}1792\), close to \(N\). Similar behavior holds for \(N{=}512\) and \(N{=}1024\) across categories, with one slight exception, that is, TC. In particular, on TC, the best setting often lands on \(C\approx N/2\) (e.g., at \(N{=}512\) the optimum is \(S{=}128,\,k{=}1\), so \(C{=}256{=}N/2\), but it is only $+0.1$ points away from the effectiveness at \(N{=}512, S=128, k=3\), which yields \(C{=}128\times(3{+}1){=}512{=}N\)).

In summary, Avey-B’s performance generally improves with a larger training sequence length \(N\). In our experiments, the best results occur at the largest tested \(N{=}2048\) for SC, QA, and IR. The exception is TC, which peaks at \(N{=}512\) but is within \(+0.09\) points of the \(N{=}1024\) setting. Across these optima, the coverage heuristic \(C=S\,(k{+}1)\approx N\) is consistently satisfied (matching or closely approaching \(N\)). This pattern suggests that, for a bidirectional encoder, one should enlarge the candidate pool via larger \(N\) while ensuring ample contextual coverage by setting \(S\,(k{+}1)\) to match or closely track \(N\). Averaging over all task categories, the best overall configuration is \(N{=}2048,\,S{=}256,\,k{=}3\), hence, it was adopted as Avey-B's default configuration.

\section{What is the Best Masking Rate?}
\label{masking-rates}

\begin{table*}[t]
\centering
\setlength{\tabcolsep}{1.2pt}
\renewcommand{\arraystretch}{1.12}
\caption{Effectiveness results at different masking rates for Avey-B's {\em base} model.}
\label{tab:masking-results-base}
\resizebox{\textwidth}{!}{
\begin{tabular}{@{}l*{16}{S}@{}}
\toprule
\multirow{2}{*}{Masking \%} &
\multicolumn{4}{c}{SC} &
\multicolumn{4}{c}{TC} &
\multicolumn{4}{c}{QA} &
\multicolumn{4}{c}{IR} \\
\cmidrule(lr){2-5}\cmidrule(lr){6-9}\cmidrule(lr){10-13}\cmidrule(lr){14-17}
& {MNLI} & {QQP} & {SST-2} & {\textbf{Avg.}} 
& {CONLL} & {Onto.} & {UNER} & {\textbf{Avg.}} 
& {ReCoRD} & {SQuAD} & {SQuADv2} & {\textbf{Avg.}} 
& {MLDR} & {MS MARCO} & {NQ} & {\textbf{Avg.}} \\
\midrule
10\% 
    & 78.07 & 88.02 & 91.86 & 85.98
    & 91.61 & 92.54 & 93.50 & 92.55
    & 36.64 & 68.57 & 60.65 & 55.29
    & 51.02 & 85.75 & 71.50 & 69.42 \\
20\% 
    & \bfseries 80.18 & \bfseries 88.91 & 91.74 & \bfseries 86.94
    & 92.33 & \bfseries 93.05 & \bfseries 93.75 & 93.04
    & \bfseries 42.91 & \bfseries 71.99 & \bfseries 64.81 & \bfseries 59.90
    & 60.02 & \bfseries 90.36 & 76.58 & 75.65 \\
30\% 
    & 78.62 & 88.49 & \bfseries 91.97 & 86.36
    & 92.45 & 93.01 & \bfseries 93.75 & \bfseries 93.07
    & 42.80 & 71.26 & 63.85 & 59.30
    & \bfseries 62.16 & 89.72 & 76.19 & 76.02 \\
40\% 
    & 77.05 & 88.02 & 91.51 & 85.53
    & 92.26 & 92.90 & 93.49 & 92.88
    & 39.70 & 69.84 & 62.45 & 57.33
    & 59.67 & 90.33 & 74.56 & 74.85 \\
50\% 
    & 66.12 & 88.32 & 91.06 & 81.83
    & \bfseries 92.86 & 92.62 & 93.16 & 92.88
    & 42.15 & 70.44 & 62.90 & 58.50
    & 62.03 & 90.06 & \bfseries 77.69 & \bfseries 76.59 \\
\bottomrule
\end{tabular}}
\end{table*}

\begin{table*}[t]
\centering
\setlength{\tabcolsep}{1.2pt}
\renewcommand{\arraystretch}{1.12}
\caption{Effectiveness results at different masking percentages for Avey-B's {\em large} model.}
\label{tab:masking-results-large}
\resizebox{\textwidth}{!}{
\begin{tabular}{@{}l*{16}{S}@{}}
\toprule
\multirow{2}{*}{Masking \%} &
\multicolumn{4}{c}{SC} &
\multicolumn{4}{c}{TC} &
\multicolumn{4}{c}{QA} &
\multicolumn{4}{c}{IR} \\
\cmidrule(lr){2-5}\cmidrule(lr){6-9}\cmidrule(lr){10-13}\cmidrule(lr){14-17}
& {MNLI} & {QQP} & {SST-2} & {\textbf{Avg.}} 
& {CONLL} & {Onto.} & {UNER} & {\textbf{Avg.}} 
& {ReCoRD} & {SQuAD} & {SQuADv2} & {\textbf{Avg.}} 
& {MLDR} & {MS MARCO} & {NQ} & {\textbf{Avg.}} \\
\midrule
10\% 
    & 81.01 & 88.54 & 92.09 & 87.21
    & 92.39 & 92.91 & 93.32 & 92.87
    & 42.46 & 71.53 & 64.63 & 59.54
    & 54.58 & 88.73 & 75.25 & 72.85 \\
20\% 
    & \bfseries 82.12 & 89.19 & \bfseries 92.32 & \bfseries 87.88
    & 92.76 & 92.94 & 93.65 & 93.12
    & 47.93 & 72.84 & 65.79 & 62.19
    & 63.53 & \bfseries 91.54 & 80.47 & 78.51 \\
30\% 
    & 81.54 & \bfseries 89.43 & 91.74 & 87.57
    & 92.59 & 92.98 & \bfseries 93.83 & 93.13
    & \bfseries 48.16 & \bfseries 73.44 & \bfseries 66.52 & \bfseries 62.71
    & 61.72 & 89.31 & \bfseries 81.76 & 77.60 \\
40\% 
    & 70.21 & 89.11 & 92.20 & 83.84
    & 92.51 & \bfseries 93.13 & 93.65 & 93.10
    & 46.29 & 73.36 & 65.71 & 61.79
    & \bfseries 64.09 & 90.96 & 81.35 & \bfseries 78.80 \\
50\% 
    & 78.19 & 89.02 & 92.20 & 86.47
    & \bfseries 92.83 & 92.95 & \bfseries 93.83 & \bfseries 93.20
    & 46.99 & 71.28 & 63.89 & 60.72
    & 61.63 & 90.49 & 79.37 & 77.16 \\
\bottomrule
\end{tabular}}
\end{table*}

Because Avey\text{-}B is pretrained with masked language modeling, the fraction of tokens replaced by the \texttt{[MASK]} token sets the task difficulty. In particular, too little masking makes reconstruction nearly trivial, whereas too much masking deprives the model of sufficient contextual signal for reliable prediction. To calibrate this trade\mbox{-}off, we swept masking rates from 10\% to 50\% for both the \emph{base} and \emph{large} models (see Table~\ref{tab:eval_models} in Appendix~\ref{pre-training-meth}), while following the same experimental setup described in Appendix~\ref{bidrectional-ranker} (except for the masking rate since we vary it here).

As shown in Table~\ref{tab:masking-results-base}, increasing the masking rate from 10\% to 20\% improves performance across SC, TC, QA, and IR for the base model. Overall, scores typically peak at around 20\%–30\% masking, yielding consistent gains on SC, TC, and QA. The exception is IR, which attains its best results at 50\% masking. At higher masking levels (40\%–50\%), performance can drop markedly (e.g., MNLI), indicating that the smaller\mbox{-}capacity model struggles when too little context is visible (masking becomes overly aggressive, weakening both the input signal and the training target).

The larger model is more robust to masking but still follows a similar trend to the base variant (see Table~\ref{tab:masking-results-large}). Performance generally improves from 10\% to 20--30\% masking, which offers the best cross\mbox{-}task trade\mbox{-}off (an exception is TC, which peaks at 50\%). QA and IR benefit the most, with ReCoRD, MLDR, and NQ rising by over +5, +9, and +6 points, respectively, relative to 10\% masking. Although the large model tolerates 40--50\% masking with modest degradation, SC remains sensitive, whereby at 40\% masking, MNLI drops by $\sim$12 points versus 20\%, then partially recovers at 50\%, going up by $\sim$8 points versus 40\%. These patterns indicate that overly aggressive masking can destabilize training even at higher capacity.

Overall, both the \emph{base} and \emph{large} variants indicate that a masking rate in the 20--30\% range is near-optimal; accordingly, we adopt a 20\% masking rate for pretraining both models.

\section{Ablation Study}
\label{ablation}


\begin{table*}[t]
\centering
\setlength{\tabcolsep}{1.2pt}
\renewcommand{\arraystretch}{1.12}
\caption{Ablations of Avey\text{-}B, removing one component at a time while holding all others fixed. (1) \emph{w/o normalization}: removes row\mbox{-}wise normalization in the dynamic layers; (2) \emph{w/o decoupling}: reverts to coupled static and dynamic parameterizations; (3) \emph{w/o compression}: omits the neural compressor; (4) \emph{w/o residual}: discards the residual connection between the compressor output and the current split’s tokens; and (5) \emph{w/o ranker}: disables the ranker entirely.}
\label{tab:ablation-aveyb}
\resizebox{\textwidth}{!}{
\begin{tabular}{@{}l*{16}{S}@{}}
\toprule
\multirow{2}{*}{Model} &
\multicolumn{4}{c}{SC} &
\multicolumn{4}{c}{TC} &
\multicolumn{4}{c}{QA} &
\multicolumn{4}{c}{IR}
\\
\cmidrule(lr){2-5}\cmidrule(lr){6-9}\cmidrule(lr){10-13}\cmidrule(lr){14-17}
& \multicolumn{1}{c}{MNLI}
& \multicolumn{1}{c}{QQP}
& \multicolumn{1}{c}{SST\mbox{-}2}
& \multicolumn{1}{c}{\textbf{Avg.}}
& \multicolumn{1}{c}{CONLL}
& \multicolumn{1}{c}{Onto.}
& \multicolumn{1}{c}{UNER}
& \multicolumn{1}{c}{\textbf{Avg.}}
& \multicolumn{1}{c}{ReCoRD}
& \multicolumn{1}{c}{SQuAD}
& \multicolumn{1}{c}{\makecell{SQuAD\\v2}}
& \multicolumn{1}{c}{\textbf{Avg.}}
& \multicolumn{1}{c}{MLDR}
& \multicolumn{1}{c}{\makecell{MS\\MARCO}}
& \multicolumn{1}{c}{NQ}
& \multicolumn{1}{c}{\textbf{Avg.}}
\\
\midrule
Avey\text{-}B (full design)
& 80.74 & 88.91 & \bfseries 91.97 & \bfseries 87.2
& \bfseries 91.84 & 93.25 & \bfseries 93.09 & \bfseries 92.72
& 39.6 & 68.52 & \bfseries 60.48 & 56.2
& 57.49 & \bfseries 90.38 & 75.64 & 74.5
\\
Avey\text{-}B w/o normalization
& 77.47 & 84.6 & 90.25 & 84.1
& 90.98 & 92.65 & 92.12 & 91.91
& 29.72 & 67.15 & 58.83 & 51.90
& 46.43 & 82.89 & 59.92 & 63.08
\\
Avey\text{-}B w/o decoupling
& 79.94 & 88.6 & 89.33 & 85.95
& 89.1 & 92.11 & 91.06 & 90.75
& 36.86 & 67.89 & 59.57 & 54.77
& 55 & 82.77 & 69.18 & 68.98
\\
Avey\text{-}B w/o compression
& \bfseries 80.8 & \bfseries 89.03 & 91.17 & 87
& 91.52 & \bfseries 93.29 & 92.97 & 92.59
& \bfseries 42.8 & \bfseries 70.54 & 59.77 & \bfseries 57.7
& \bfseries 60.95 & 89.13 & \bfseries 76.92 & \bfseries 75.66
\\

Avey\text{-}B w/o residual
& 77.53 & 87.48 & 90.37 & 85.12
& 90.8 & 92.44 & 91.17 & 91.47
& 34.71 & 66.08 & 56.02 & 52.27
& 55.22 & 86.74 & 71.72 & 71.22
\\

Avey\text{-}B w/o ranker
& 77.36 & 87.32 & 88.76 & 84.48
& 90.2 & 92.39 & 90.74 & 91.11
& 25.85 & 66.27 & 57.92 & 50.01
& 38.28 & 85.35 & 61.93 & 61.85
\\

\bottomrule
\end{tabular}}
\end{table*}

\begin{figure*}[t]
  \centering
  \includegraphics[width=\textwidth, height=6.5cm, keepaspectratio=true]
  {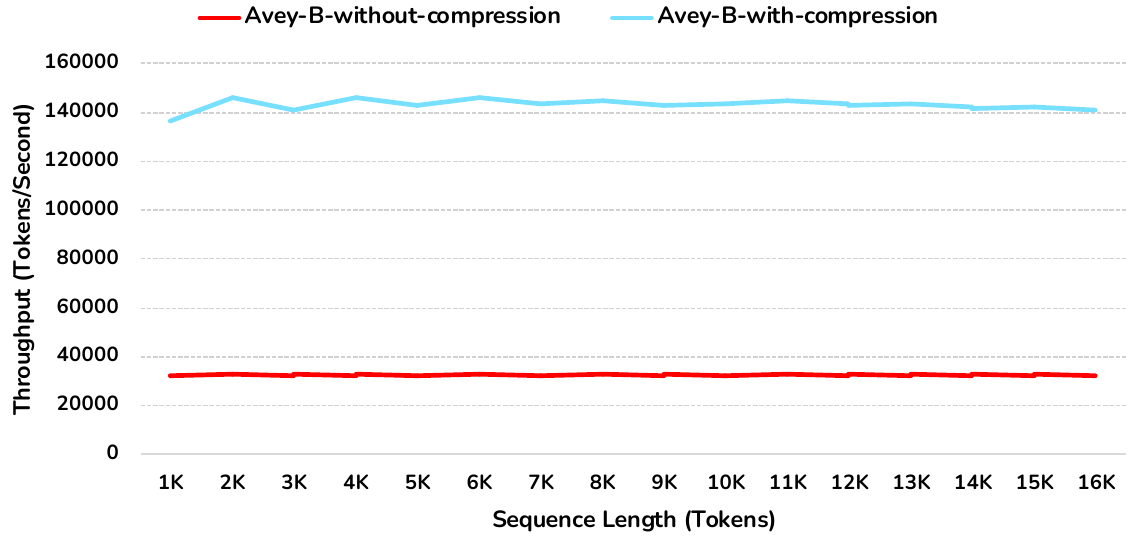}
  \caption{The throughput of Avey-B {\em with} and {\em without} the neural compressor.}
  \label{fig:throughput_ablation}
\end{figure*}

In this study, we conduct a series of ablation experiments on Avey\text{-}B. To this end, we fix (1) the sequence length \(N\), split size \(S\), and top-\(k\) retrieval depth to the best settings from Appendix~\ref{best-conf}; (2) the static–dynamic interleaving pattern to the best arrangement from Appendix~\ref{static-dynamic-patterns}; and (3) the dynamic-layer normalization to the most effective scheme from Appendix~\ref{normalization-techniques}. In addition, we follow the experimental setup described in Appendix~\ref{bidrectional-ranker}.

Table~\ref{tab:ablation-aveyb} reports ablations over five key architectural components of Avey\text{-}B: (1) decoupling static and dynamic parameterizations;  (2) applying row\mbox{-}wise normalization in the dynamic layers; (3) incorporating a neural compressor within the ranker; (4) adding a residual connection between the compressor output and the original tokens of the current split that is being compressed with its top\mbox{-}k retrieved splits; and  (5) removing the ranker entirely.

\begin{figure*}[t]
  \centering
  \includegraphics[width=\textwidth, height=10.5cm, keepaspectratio=true]
  {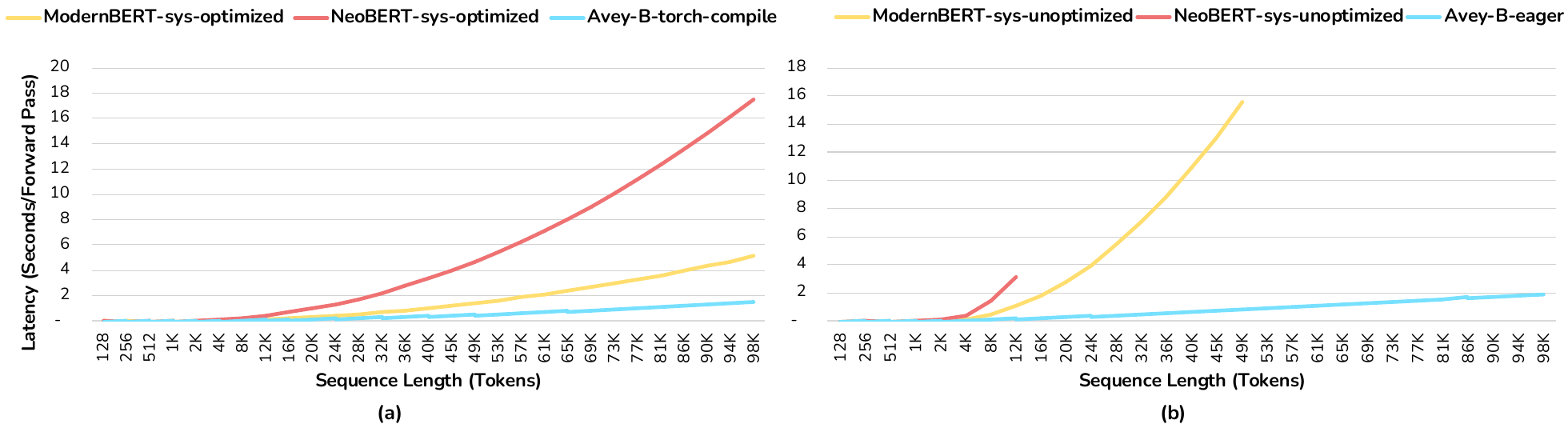}
  \caption{Latency of Avey\text{-}B, ModernBERT, and NeoBERT on NVIDIA B200 GPUs with mixed precision (BF16). We use Avey\text{-}B \emph{base}, ModernBERT \emph{base}, and NeoBERT \emph{medium} (the only publicly available size). Avey\text{-}B is shown in (a) as optimized using \texttt{torch.compile} (no fused-kernel implementation is available yet) and in (b) as unoptimized (eager). For ModernBERT and NeoBERT, latency is shown for system–optimized ({\em with} FlashAttention) and system–unoptimized (eager) variants in (a) and (b), respectively.}
  \label{fig:latency1}
\end{figure*}

As illustrated in Table~\ref{tab:ablation-aveyb}, coupling the static and dynamic parameterizations (i.e., see ``full design'' and ``w/o decoupling" rows) yields consistent drops of \({1.43}\%\), \({2.12}\%\), \({2.53}\%\), and \({7.40}\%\) on SC, TC, QA, and IR, respectively, confirming that separating similarity scoring from neural learning improves accuracy. Row\mbox{-}wise normalization proves even more critical, whereby removing it leads to significantly larger degradations of \({3.55}\%\), \({0.87}\%\), \({7.65}\%\), and \({15.33}\%\) across SC, TC, QA, and IR, respectively.

As discussed in Section~\ref{sec:neural-compression}, the neural compressor reduces the number of contextualized tokens per split from \((k{+}1)S\) to \(S\). This reduction yields a substantial \({4.37}\times\) throughput improvement (see Fig.~\ref{fig:throughput_ablation}) while preserving strong task performance (see Table~\ref{tab:ablation-aveyb}). On SC and TC, compression has negligible effect and even produces slight average gains of \(+0.23\%\) and \(+0.14\%\), respectively, likely due to the removal of noisy global tokens that can arise under top\mbox{-}k retrieval\footnote{With a split containing \(S\) tokens and the absence of a hard relevance threshold (where the ranker retrieves the top-\(k\) splits for each query split regardless of their absolute relevance), it is possible for some retrieved splits to contain weakly relevant or noisy tokens.}. In contrast, QA and IR exhibit modest average drops of \(2.68\%\) and \(1.56\%\), respectively. These tasks rely more heavily on fine\mbox{-}grained cross\mbox{-}split evidence and subtle retrieval cues, which compression may partially attenuate. Overall, considering the \({4.37}\times\) efficiency improvement, the negligible (and sometimes positive) impact on SC and TC, and the modest reductions on QA and IR, the neural compressor provides a clear and favorable efficiency--effectiveness trade-off.

Besides compression, Avey\text{-}B adds a residual connection between the compressor output and the current split’s \(S\) tokens to preserve local signal. Table~\ref{tab:ablation-aveyb} shows that removing this residual degrades every benchmark, with an average reduction of \(3.38\%\), underscoring its role in maintaining lexical fidelity and stabilizing context integration.

Finally, as described in~\citep{hammoud2025don}, the ranker is invoked \emph{only once} per forward/backward pass, prior to the first layer of the neural processor, and retrieves the top-\(k\) relevant splits for each current split using shallow (initial) embeddings. One might expect that deeper contextualized embeddings could yield better retrieval; however, removing the ranker entirely (i.e., allowing the neural processor to fit and operate on the entire sequence) results in universal degradation across all benchmarks, with a large average drop of \(7.46\%\) (see Table~\ref{tab:ablation-aveyb}). This corroborates that retrieval is essential for Avey\text{-}B’s effectiveness.

Importantly, retrieval based on shallow or static embeddings is a standard practice in dense retrieval and matching systems such as DPR~\citep{karpukhin2020dense}, ColBERT~\citep{khattab2020colbert}, ANCE~\citep{xiong2021approximate}, and CLIP~\citep{radford2021learning}, among others. These systems deliberately rely on early\mbox{-}layer or fixed representations because deeper contextualized embeddings tend to become increasingly task\mbox{-}specific, thereby distorting global semantic structure and degrading their overall retrieval quality.

To evaluate whether deeper\mbox{-}layer retrieval is beneficial, we incorporated the ranker at \emph{every} layer such that it operates on contextualized embeddings. This intervention substantially degraded performance, causing an average drop of \(27.28\%\) across benchmarks, and slowed efficiency by \(5.9\times\). These findings reinforce that deeper\mbox{-}layer retrieval is not only computationally prohibitive (as it would require recomputing MaxSim and reassembling contextualized blocks at each layer) but also detrimental for effectiveness.

To summarize, these ablations collectively validate Avey\text{-}B’s core architectural principles, namely, (1) decoupling and normalization are critical for both effectiveness and stability (more on this in Appendix~\ref{variance-results});  (2) the residual connection preserves essential local information;  (3) the neural compressor delivers substantial efficiency gains with minimal accuracy loss; and (4) the ranker is indispensable, with shallow\mbox{-}embedding retrieval proving both computationally justified and empirically optimal.

\section{Latency Results}
\label{latency-results}

In Section~\ref{efficiency}, we reported \emph{throughput} (tokens/second) for Avey\text{-}B, ModernBERT, and NeoBERT. In this section, we present \emph{latency} (seconds per forward pass) for the same encoders. As noted in Section~\ref{efficiency}, Avey is recent and lacks a fused\mbox{-}kernel (CUDA/Triton) implementation. Accordingly, we measure Avey\text{-}B’s latency using both an \emph{eager} PyTorch implementation (\emph{Avey\mbox{-}B\mbox{-}eager}) and \texttt{torch.compile} (\emph{Avey\mbox{-}B\mbox{-}torch\mbox{-}compile}). By contrast, both ModernBERT and NeoBERT have optimized implementations using FlashAttention~\citep{dao2022flashattention}. We therefore report their latencies \emph{with} and \emph{without} FlashAttention, and denote the resulting variants as \emph{sys\mbox{-}optimized} and \emph{sys\mbox{-}unoptimized}, respectively.

As with throughput, Avey\mbox{-}B achieves consistently lower latency and markedly superior long\mbox{-}context scaling (see Fig.~\ref{fig:latency1} (a)), underscoring its computational efficiency even in the absence of a fused kernel. To quantify long-context behavior, we again fit a power-law model, \(L(N) \propto N^{\beta}\), where \(N\) denotes the sequence length and larger exponents~\(\beta\) indicate worse scaling. Under the optimized setting, ModernBERT\mbox{-}sys\mbox{-}optimized and NeoBERT\mbox{-}sys\mbox{-}optimized exhibit \(\beta_{\text{ModernBERT}} = 1.17\) and \(\beta_{\text{NeoBERT}} = 1.20\), reflecting the bandwidth and memory-pressure limitations inherent to quadratic attention. In contrast, Avey\mbox{-}B\mbox{-}torch\mbox{-}compile attains a substantially milder exponent of \(\beta_{\text{Avey-B}} = 0.68\), and delivers more than a \(3\times\) and \(10\times\) latency advantage over ModernBERT\mbox{-}sys\mbox{-}optimized and NeoBERT\mbox{-}sys\mbox{-}optimized, respectively, at 96k tokens.

The unoptimized setting further accentuates these differences (see Fig.~\ref{fig:latency1} (b)). ModernBERT\mbox{-}sys\mbox{-}unoptimized and NeoBERT\mbox{-}sys\mbox{-}unoptimized exhibit substantially steeper latency growth, with exponents \(\beta_{\text{ModernBERT}} = 1.42\) and \(\beta_{\text{NeoBERT}} = 1.63\), and both models encounter out\mbox{-}of\mbox{-}memory failures well before the maximum tested sequence length. In contrast, Avey\mbox{-}B\mbox{-}eager achieves the shallowest growth of all configurations, with \(\beta_{\text{Avey-B}} = 0.58\), and maintains stable latency across the entire sequence-length range. These results confirm that Avey\mbox{-}B’s latency advantage is structural, especially since its neural processor depends on split size rather than global sequence length, yielding linear \(\mathcal{O}(N)\) scaling and robust long-context performance even in the absence of compiler- or kernel-level optimizations.

\section{Cross-Seed Variance Analysis}
\label{variance-results}

\begin{table*}[ht]
\centering
\setlength{\tabcolsep}{1.2pt}
\renewcommand{\arraystretch}{1.12}
\caption{Standard deviations across 10 random seeds for all evaluated encoders and benchmarks.}
\label{tab:std-results}
\resizebox{\textwidth}{!}{
\begin{tabular}{@{}l@{\hspace{8pt}}l*{12}{S}@{}}
\toprule
\multirow{2}{*}{} & \multirow{2}{*}{Model} &
\multicolumn{3}{c}{SC} & \multicolumn{3}{c}{TC} & \multicolumn{3}{c}{QA} & \multicolumn{3}{c}{IR} \\
\cmidrule(lr){3-5}\cmidrule(lr){6-8}\cmidrule(lr){9-11}\cmidrule(lr){12-14}
& &
\multicolumn{1}{c}{MNLI} &
\multicolumn{1}{c}{QQP} &
\multicolumn{1}{c}{SST-2} &
\multicolumn{1}{c}{CONLL} &
\multicolumn{1}{c}{Onto.} &
\multicolumn{1}{c}{UNER} &
\multicolumn{1}{c}{ReCoRD} &
\multicolumn{1}{c}{SQuAD} &
\multicolumn{1}{c}{SQuADv2} &
\multicolumn{1}{c}{MLDR} &
\multicolumn{1}{c}{MSMARCO} &
\multicolumn{1}{c}{NQ} \\
\midrule
\multirow{4}{*}{\rotatebox[origin=c]{90}{Base}}
& Avey\text{-}B
    & 0.92 & 0.12 & 0.97
    & 0.71 & 0.12 & 2.65
    & 0.67 & 0.17 & 0.47
    & 0.67 & 1.34 & 0.75 \\
& BERT
    & 0.17 & 0.16 & 0.31
    & 1.10 & 0.31 & 0.69
    & 3.70 & 0.53 & 0.32
    & 1.22 & 0.73 & 1.44 \\
& RoBERTa
    & 0.13 & 0.10 & 0.38
    & 0.25 & 0.14 & 0.60
    & 0.42 & 0.11 & 0.19
    & 0.32 & 1.02 & 0.72 \\
& ModernBERT
    & 0.37 & 0.12 & 0.53
    & 0.24 & 0.11 & 0.45
    & 0.70 & 2.36 & 0.29
    & 1.40 & 1.39 & 2.18 \\
\midrule
\multirow{1}{*}{\rotatebox[origin=c]{90}{M}}
& NeoBERT
    & 0.40 & 0.14 & 1.20
    & 0.24 & 0.17 & 0.47
    & 5.98 & 0.91 & 0.66
    & 4.70 & 4.30 & 9.48 \\
\midrule
\multirow{4}{*}{\rotatebox[origin=c]{90}{Large}}
& Avey\text{-}B
    & 0.20 & 0.43 & 0.52
    & 0.33 & 0.10 & 1.06
    & 0.27 & 0.15 & 0.30
    & 1.03 & 1.77 & 1.06 \\
& BERT
    & 0.28 & 8.24 & 0.98
    & 0.37 & 0.13 & 0.80
    & 2.47 & 0.97 & 0.58
    & 0.94 & 1.54 & 1.01 \\
& RoBERTa
    & 0.16 & 0.20 & 0.41
    & 0.26 & 0.10 & 0.61
    & 0.26 & 0.33 & 0.18
    & 0.50 & 1.58 & 0.74 \\
& ModernBERT
    & 0.18 & 0.08 & 0.35
    & 0.50 & 0.14 & 3.67
    & 17.79 & 0.25 & 1.35
    & 2.27 & 1.70 & 3.06 \\
\bottomrule
\end{tabular}}
\end{table*}

To complement the median results reported in Table~\ref{tab:main-results}, we further evaluate each model’s robustness by examining its sensitivity to random initialization. As described in Section~\ref{exp_setup}, for every benchmark in the SC, TC, QA, and IR categories, we swept four learning rates and fine-tuned each configuration using 10 independent random seeds. While Table~\ref{tab:main-results} reports the \emph{median} performance across seeds at the \emph{best} learning rate for each benchmark, we additionally compute the standard deviation (SD) across the 10 runs for each model--benchmark pair. These SD values quantify the variability induced by initialization and provide an additional perspective on optimization stability and robustness beyond median performance.

At the \emph{base} scale (and \emph{medium} for NeoBERT), RoBERTa exhibits the lowest overall variance, consistent with its well-established fine-tuning stability. Avey-B ranks second, followed by ModernBERT, BERT, and NeoBERT, in that order. Across most benchmarks, Avey-B maintains tightly concentrated variances, with the exception of UNER, where a single outlier resulted in higher variability (SD = 2.65).

At the \emph{large} scale, the differences in stability become more pronounced. ModernBERT, despite strong median performance, exhibits substantial instability on several benchmarks (most notably ReCoRD, UNER, and NQ), suggesting high sensitivity to initialization arising from its alternating attention pattern and extended context window. Likewise, BERT demonstrates occasional catastrophic variance spikes (e.g., QQP with SD = 8.24), indicating susceptibility to poor optimization minima. In contrast, Avey-B maintains uniformly low SDs across nearly all benchmarks, with no signs of pathological instability. Its variances remain tightly bounded (typically below 1.06), often surpassing most Transformer-based baselines and again ranking just behind RoBERTa.

These variance measurements show that Avey-B is among the most statistically consistent encoders in our evaluation. We attribute this robustness to three core architectural principles: (1) the decoupling of static and dynamic layers, which prevents destructive interactions between fixed parameters and similarity scores; (2) row-normalized similarity matrices, which stabilize activation magnitudes and ensure well-behaved gradient flow; and (3) neural compression, which filters out irrelevant signals in retrieved contexts. Collectively, these mechanisms reduce sensitivity to initialization and foster smoother optimization dynamics, accounting for Avey-B’s consistently low variance across tasks.

\section{Effect of Enforcing Non-negativity in Static Layers}
\label{signed-unsigned}

\begin{table*}[ht]
\centering
\caption{Effectiveness results for an unconstrained Avey-B design (i.e., \emph{Avey-B-signed}) that allows mixed signs (admitting inhibitory effects), and an unsigned variant (i.e., \emph{Avey-B-unsigned}), which enforces nonnegativity in each static layer.}
\label{tab:signed-vs-unsigned}
\scriptsize
\setlength{\tabcolsep}{1.8pt}
\renewcommand{\arraystretch}{1.05}
\begin{adjustbox}{max width=\textwidth}
\begin{tabular}{lcccccccccccccccc}
\toprule
\multirow{2}{*}{\textbf{Model}} 
& \multicolumn{4}{c}{\textbf{SC}} 
& \multicolumn{4}{c}{\textbf{TC}} 
& \multicolumn{4}{c}{\textbf{QA}} 
& \multicolumn{4}{c}{\textbf{IR}} \\
\cmidrule(lr){2-5}\cmidrule(lr){6-9}\cmidrule(lr){10-13}\cmidrule(l){14-17}
& MNLI & QQP & SST\mbox{-}2 & \textbf{Avg.} 
& CONLL & Onto. & UNER & \textbf{Avg.} 
& ReCoRD & SQuAD & SQuAD\mbox{-}v2 & \textbf{Avg.} 
& MLDR & MS\mbox{-}MARCO & NQ & \textbf{Avg.} \\
\midrule
Avey\mbox{-}B\mbox{-}signed 
& \textbf{80.74} & \textbf{88.91} & \textbf{91.97} & \textbf{87.21} 
& \textbf{91.84} & \textbf{93.25} & \textbf{93.09} & \textbf{92.73} 
& \textbf{39.60} & \textbf{68.52} & \textbf{60.48} & \textbf{56.20} 
& 57.49 & \textbf{90.38} & \textbf{75.64} & \textbf{74.50} \\
Avey\mbox{-}B\mbox{-}unsigned 
& 78.66 & 87.69 & 90.94 & 85.76 
& 91.07 & 92.32 & 92.49 & 91.96 
& 38.51 & 66.47 & 58.25 & 54.41 
& \textbf{58.90} & 89.79 & 73.30 & 74.00 \\
\bottomrule
\end{tabular}
\end{adjustbox}
\end{table*}

As discussed in Section~\ref{sec:decoupled-param}, a static layer is \emph{similarity\mbox{-}agnostic}, hence, can neither alter the values of similarity scores produced by a dynamic layer nor introduce token\mbox{-}specific, similarity\mbox{-}conditioned sign changes. At most, it can apply a global (possibly negative) gain to a neuron’s aggregate, which may change the overall sign while preserving the magnitude ordering induced by a dynamic layer. We now examine whether permitting sign changes of dynamic scores within a static layer is ultimately helpful or harmful.

To this end, we adopt (1) the best-performing sequence length \(N\), split size \(S\), and top-\(k\) depth from Appendix~\ref{best-conf} \((i.e., N{=}2048,\,S{=}256,\,k{=}3)\); (2) the best static--dynamic arrangement from Appendix~\ref{static-dynamic-patterns} (i.e., the \emph{interleaved} static-dynamic pattern); and (3) the most effective dynamic-layer normalization from Appendix~\ref{normalization-techniques} (i.e., row-wise normalization by the sum of similarities). We refer to this full Avey-B configuration as \emph{Avey-B-signed} and compare it to a variant that makes a single change, that is, enforcing nonnegativity in each static layer, denoted as \emph{Avey-B-unsigned}. Lastly, we follow the experimental setup presented in Appendix~\ref{bidrectional-ranker}.

As illustrated in Table~\ref{tab:signed-vs-unsigned}, enforcing nonnegativity on all learned static weights (i.e., \emph{Avey-B-unsigned}) consistently degrades performance relative to the default Avey-B design (i.e., Avey\text{-}B-signed). Averaged within task families, the constraint reduces effectiveness on SC, TC, QA, and IR by \(-1.44\), \(-0.77\), \(-1.79\), and \(-0.51\) points, respectively. Collapsing the four family averages yields an overall effectiveness drop of \(\sim\)\(1.13\) points.

As discussed in Section~\ref{sec:decoupled-param}, constraining \(\mathbf{V}\!\ge\!0\) (see Equation~\ref{eq:contextualizer}) eliminates \emph{inhibitory} (negative) contributions in each static layer, yielding purely additive mixing. This reduces representational contrast and the ability to \emph{subtract} misleading context, particularly salient in QA, where disambiguation among semantically similar spans is critical, hence, the larger average loss there. The small MLDR uptick (under IR) suggests that a purely additive prior can occasionally act as a mild regularizer for simpler retrieval signals, but it does not offset the broader declines across task families. 

In summary, the results shown in Table~\ref{tab:signed-vs-unsigned} support retaining signed weights in static layers to preserve inhibitory patterns and maintain the monotonic, similarity\mbox{-}respecting updates established by dynamic layers.

\section{Coupled vs. Decoupled Layers: A Statistical Analysis}
\label{insights-decoupling}

\begin{figure*}[t]
  \centering
  \includegraphics[width=\textwidth, height=15.5cm, keepaspectratio=true]
  {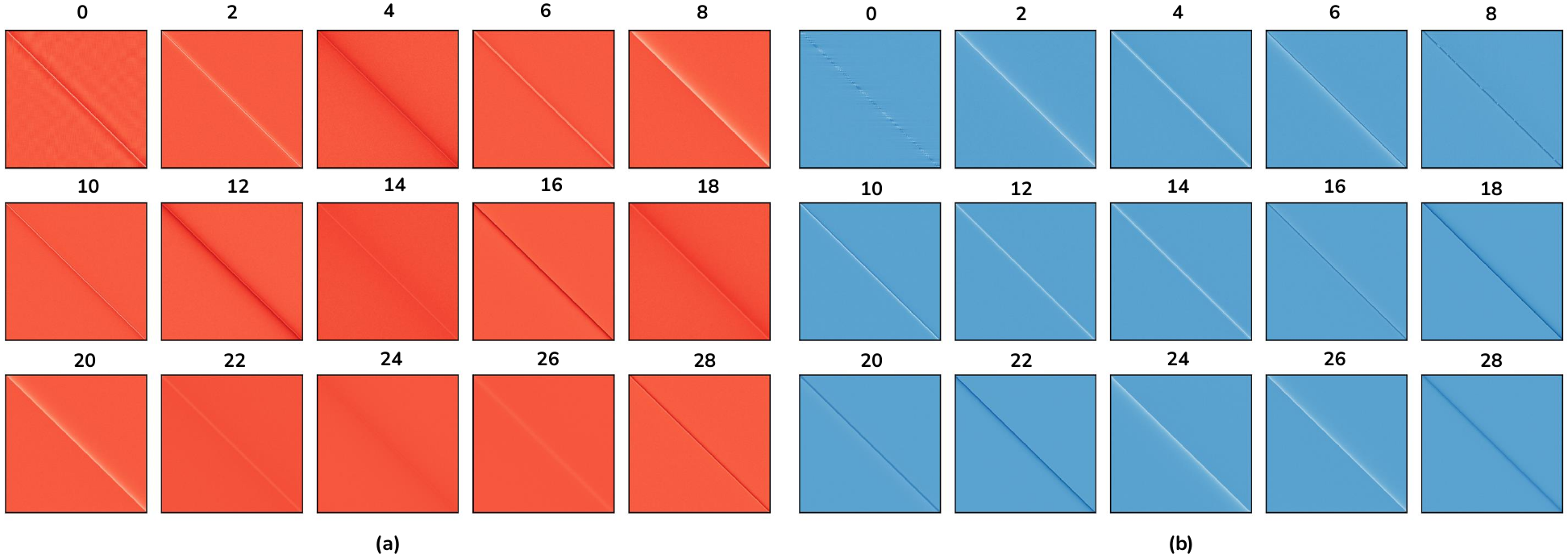}
  \caption{Learned static cross\mbox{-}embedding projection matrices for the (a) \emph{coupled} configuration (left or red) with 15 matrices uniformly subsampled from 30 static layers and (b) \emph{decoupled} configuration (right or blue) with all 15 static matrices (dynamic and static layers are interleaved, hence, only 15 static matrices exist). For comparability, we display 15 layers per panel. The coupled setting exhibits diffuse, more homogeneous patterns (e.g., see layers 14, 22, 24, and 26) suggestive of redundancy, whereas the decoupled setting shows sharper, more heterogeneous structure and variability in spread, indicating greater representational diversity.}
  \label{fig:heatmap1}
\end{figure*}

We now analyze the learned cross\mbox{-}embedding projection matrices (say, \(\mathbf{V}\)) for the \emph{coupled} and \emph{decoupled} Avey\text{-}B models from the ablation study in Appendix~\ref{ablation}. Table~\ref{tab:layer-stats} reports summary statistics for each model. In the coupled case, we observe a clear {\em positivity bias}, especially in deeper layers, wherein the fraction of positive entries (i.e., the number of positive weights divided by the total number of weights) approaches one in several layers (in layers 8 and 13, it indeed hit 1). This bias can be explained as follows. Because the enricher employs a nonnegative activation (i.e., \(\mathrm{ReLU}^2\)), the contextualizer’s similarity matrix (say, \(\mathbf{S}\)) is elementwise nonnegative. As such, the coupled mixing \(\mathbf{M}=\mathbf{V}\odot\mathbf{S}\) inherits its \emph{signs} entirely from \(\mathbf{V}\). Any negative entry in \(\mathbf{V}\) flips a large positive similarity into a negative contribution, violating {\em monotonicity with respect to relevance} (see Appendix~\ref{monotonicity_proof}) and degrading training. The optimizer therefore pushes \(\mathbf{V}\) toward nonnegativity to avoid these destructive sign inversions, yielding the observed late\mbox{-}layer collapse toward positive weights.

Despite this positivity bias, a nontrivial fraction of negative entries persists in the coupled model (see Table~\ref{tab:layer-stats} again). This residual negativity is precisely the failure mode our hypothesis predicts, that is, wherever a neuron retains negative weights, large positive similarities can be inverted into negative contributions, yielding local violations of relevance monotonicity.

By contrast, in the decoupled case the dynamic layers alone produce the mixing weights. These weights are normalized and nonnegative by construction, so monotonicity is enforced at the similarity operator. The static layers are learned separately and no longer need to be driven into nonnegativity to preserve monotonicity. As shown in Table~\ref{tab:layer-stats}, this yields a near-zero mean with roughly balanced positive and negative weights (without the late\mbox{-}layer positivity bias), retains inhibitory patterns (i.e., learned negative influences) where useful, and avoids the sign\mbox{-}flip failure mode.  

  
Beyond sign distribution, the two models also diverge in the \emph{dispersion} of their weights. Coupled matrices exhibit reduced standard deviation across layers, indicating more stable transformations that converge toward smooth and homogeneous patterns. Decoupled matrices, by contrast, sustain larger fluctuations, admitting both stronger positive and stronger negative values. This higher variance may reflect greater representational flexibility. Norm statistics supports this interpretation, whereby coupled matrices accumulate larger \(\ell_{1}\) norm, distributing weight more evenly across entries, whereas decoupled matrices attain slightly higher \(\ell_{2}\)  values, implying that fewer entries dominate with sharper magnitudes.  

Qualitatively, the static matrices in both variants exhibit Toeplitz\mbox{-}like (approximately shift\mbox{-}invariant) structure reminiscent of gMLP~\citep{liu2021pay} (see Fig.~\ref{fig:heatmap1}). As in gMLP, where such patterns emerge without an explicit prior, our static layers converge to diagonally dominant, near\mbox{-}diagonal matrices indicative of locality. This alignment suggests that locality\mbox{-}preserving, Toeplitz\mbox{-}like structure can arise naturally in architectures that employ fixed, input\mbox{-}independent transformations to stabilize and scaffold subsequent dynamic computations.
  
In summary, \emph{coupling} tends to regularize the cross\mbox{-}embedding projections toward homogeneous, nearly nonnegative transformations, whereas \emph{decoupling} promotes healthy diversity and sharper structure, while preserving monotonicity with respect to relevance.


\begin{table*}[t]
\centering
\setlength{\tabcolsep}{6pt}
\renewcommand{\arraystretch}{1.2}
\caption{Layer statistics for coupled vs. decoupled settings. For comparability, we display 15 layers per setting. For the coupled setting, we uniformly subsampled 15 layers from 30 static layers. For the decoupled setting, {\em all} the 15 static layers are shown (dynamic and static layers are interleaved, hence, only 15 static layers exist). The coupled setting exhibits {\em positivity bias} (see the ``fraction of positive" column), while the decoupled setting demonstrates more {\em balanced} positive and negative weights, indicating greater representational diversity.}
\label{tab:layer-stats}
\resizebox{\textwidth}{!}{
\begin{tabular}{@{}l*{10}{S}@{}}
\toprule
\multicolumn{11}{c}{\textbf{Coupled}}\\
\addlinespace[2pt]
\textbf{Layer} &
{Mean} & {Std} & {Min} & {Median} & {Max} &
{Abs.\ Mean} & {L1 Norm} & {L2 Norm} &
{Frac.\ Pos.} & {Frac.\ Neg.} \\
\midrule
1  & 0.0011 & 0.1198 & -1.6666 & -0.0035 & 1.2015 & 0.0514 & 3369.19 & 30.68 & 0.4675 & 0.5325 \\
2  & -0.0105 & 0.1078 & -1.5488 & 0.0023 & 0.3019 & 0.0358 & 2345.75 & 27.73 & 0.5335 & 0.4665 \\
3  & 0.0849 & 0.0894 & -0.2293 & 0.0640 & 0.6771 & 0.0901 & 5906.35 & 31.56 & 0.8965 & 0.1035 \\
4  & 0.0040 & 0.0953 & -1.1896 & 0.0077 & 0.2948 & 0.0337 & 2211.32 & 24.42 & 0.6145 & 0.3855 \\
5  & -0.0261 & 0.1295 & -1.0887 & 0.0042 & 0.1540 & 0.0568 & 3724.35 & 33.82 & 0.5495 & 0.4505 \\
6  & 0.0057 & 0.0831 & -1.4685 & 0.0041 & 0.6306 & 0.0300 & 1963.87 & 21.33 & 0.5651 & 0.4349 \\
7  & 0.0344 & 0.1125 & -0.2898 & 0.0014 & 1.0037 & 0.0604 & 3959.52 & 30.11 & 0.5150 & 0.4850 \\
8  & 0.1144 & 0.0417 & -0.1394 & 0.1130 & 0.3251 & 0.1148 & 7525.47 & 31.17 & 0.9954 & 0.0046 \\
9  & 0.0036 & 0.1014 & -0.3830 & -0.0093 & 1.4250 & 0.0351 & 2300.72 & 25.97 & 0.3363 & 0.6637 \\
10 & 0.0884 & 0.0607 & -0.0699 & 0.0756 & 0.3797 & 0.0892 & 5844.50 & 27.45 & 0.9793 & 0.0207 \\
11 & -0.0167 & 0.1023 & -0.9018 & 0.0054 & 0.1167 & 0.0397 & 2604.39 & 26.53 & 0.5935 & 0.4065 \\
12 & 0.0711 & 0.0228 & -0.0431 & 0.0722 & 0.1619 & 0.0717 & 4697.84 & 19.12 & 0.9841 & 0.0159 \\
13 & 0.0532 & 0.0272 & -0.0419 & 0.0477 & 0.2118 & 0.0532 & 3487.26 & 15.28 & 0.9970 & 0.0030 \\
14 & 0.0316 & 0.0199 & -0.1382 & 0.0345 & 0.1821 & 0.0350 & 2293.24 & 9.56  & 0.9535 & 0.0465 \\
15 & -0.0038 & 0.0714 & -0.3899 & -0.0097 & 0.8313 & 0.0292 & 1912.78 & 18.29 & 0.3377 & 0.6623 \\
\midrule
\textbf{Avg.} & 0.0290 & 0.0790 & -0.6392 & 0.0273 & 0.5265 & 0.0551 & 3609.77 & 24.87 & 0.6879 & 0.3121 \\
\addlinespace[4pt]
\cmidrule(lr){1-11}
\addlinespace[4pt]
\multicolumn{11}{c}{\textbf{Decoupled}}\\
\addlinespace[2pt]
\textbf{Layer} &
{Mean} & {Std} & {Min} & {Median} & {Max} &
{Abs.\ Mean} & {L1 Norm} & {L2 Norm} &
{Frac.\ Pos.} & {Frac.\ Neg.} \\
\midrule
1  & 0.0004 & 0.0832 & -0.9750 & 0.0004 & 1.0399 & 0.0355 & 2327.62 & 21.31 & 0.5056 & 0.4944 \\
2  & -0.0244 & 0.1179 & -1.2713 & 0.0005 & 0.2251 & 0.0468 & 3065.94 & 30.83 & 0.5080 & 0.4920 \\
3  & -0.0056 & 0.1180 & -1.9280 & 0.0088 & 0.2708 & 0.0353 & 2312.23 & 30.24 & 0.6572 & 0.3428 \\
4  & -0.0084 & 0.1153 & -1.0083 & 0.0062 & 1.3487 & 0.0431 & 2825.07 & 29.60 & 0.6001 & 0.3999 \\
5  & 0.0013 & 0.1039 & -0.4988 & -0.0054 & 1.7028 & 0.0325 & 2129.54 & 26.59 & 0.4221 & 0.5779 \\
6  & -0.0015 & 0.1100 & -2.0893 & 0.0049 & 0.6812 & 0.0302 & 1978.08 & 28.17 & 0.6133 & 0.3867 \\
7  & -0.0044 & 0.1054 & -1.6018 & 0.0091 & 0.3293 & 0.0313 & 2050.08 & 27.01 & 0.6904 & 0.3096 \\
8  & -0.0060 & 0.1073 & -1.2192 & 0.0091 & 0.3048 & 0.0321 & 2104.69 & 27.52 & 0.6973 & 0.3027 \\
9  & -0.0032 & 0.0924 & -0.2576 & -0.0034 & 1.3654 & 0.0259 & 1697.87 & 23.67 & 0.4200 & 0.5800 \\
10 & 0.0128 & 0.1171 & -0.3421 & -0.0062 & 1.2714 & 0.0327 & 2145.28 & 30.16 & 0.3389 & 0.6611 \\
11 & -0.0034 & 0.0909 & -0.2793 & -0.0063 & 1.1752 & 0.0271 & 1775.56 & 23.28 & 0.3547 & 0.6453 \\
12 & 0.0123 & 0.1136 & -0.2212 & -0.0060 & 1.6247 & 0.0308 & 2019.32 & 29.24 & 0.3305 & 0.6695 \\
13 & -0.0229 & 0.1109 & -1.3461 & 0.0037 & 0.2449 & 0.0402 & 2637.36 & 28.99 & 0.5819 & 0.4181 \\
14 & -0.0169 & 0.1012 & -1.0442 & 0.0049 & 0.1791 & 0.0350 & 2292.65 & 26.27 & 0.6086 & 0.3914 \\
15 & 0.0143 & 0.0752 & -0.0638 & 0.0004 & 0.8190 & 0.0272 & 1779.38 & 19.60 & 0.5095 & 0.4905 \\
\midrule
\textbf{Avg.} & -0.0037 & 0.1042 & -0.9431 & 0.0014 & 0.8388 & 0.0337 & 2209.38 & 26.83 & 0.5225 & 0.4775 \\
\bottomrule
\end{tabular}}
\end{table*}

\section{Long-Range Benchmark Results}
\label{long-range}

\begin{table*}[t]
\centering
\setlength{\tabcolsep}{7pt}  
\renewcommand{\arraystretch}{1.05}
\caption{Needle-in-a-haystack (NIAH-1) accuracy across sequence lengths from 1k to 96k for several encoders at different scales (M = Medium; OOM = Out-of-Memory).}
\label{tab:niah-one}
\begin{tabular}{@{}l@{\hspace{6pt}}l*{8}{S}@{}}
\toprule
\multirow{2}{*}{} & \multirow{2}{*}{Model} &
\multicolumn{8}{c}{NIAH-1} \\
\cmidrule(lr){3-10}
& &
\multicolumn{1}{c}{1k} &
\multicolumn{1}{c}{2k} &
\multicolumn{1}{c}{4k} &
\multicolumn{1}{c}{8k} &
\multicolumn{1}{c}{16k} &
\multicolumn{1}{c}{32k} &
\multicolumn{1}{c}{64k} &
\multicolumn{1}{c}{96k} \\
\midrule
\multirow{2}{*}{\rotatebox[origin=c]{90}{Base}}
& Avey\text{-}B
    & 79.405 & 79.210 & 78.940 & 79.185 & 78.905 & 77.725 & 77.175 & 75.715 \\
& ModernBERT
    & 67.735 & 67.640 & 68.305 & 70.670
    & \multicolumn{1}{c}{--}
    & \multicolumn{1}{c}{--}
    & \multicolumn{1}{c}{--}
    & \multicolumn{1}{c}{--} \\
\midrule
\multirow{1}{*}{\rotatebox[origin=c]{90}{M}}
& NeoBERT
    & 79.650 & 79.130 & 74.725
    & \multicolumn{1}{c}{--}
    & \multicolumn{1}{c}{--}
    & \multicolumn{1}{c}{--}
    & \multicolumn{1}{c}{--}
    & \multicolumn{1}{c}{--} \\
\midrule
\multirow{2}{*}{\rotatebox[origin=c]{90}{Large}}
& Avey\text{-}B
    & 79.690 & 79.235 & 79.030 & 79.580
    & 79.440 & 78.440 & 76.760 & 76.055 \\
& ModernBERT
    & 68.800 & 67.515 & 67.200
    & \multicolumn{1}{c}{\textsc{oom}}
    & \multicolumn{1}{c}{--}
    & \multicolumn{1}{c}{--}
    & \multicolumn{1}{c}{--}
    & \multicolumn{1}{c}{--} \\
\bottomrule
\end{tabular}
\vspace{-0.75em}
\end{table*}

\begin{table*}[t]
\centering
\setlength{\tabcolsep}{7pt}
\renewcommand{\arraystretch}{1.05}
\caption{Needle-in-a-haystack (NIAH-2) accuracy across sequence lengths from 1k to 96k for several encoders at different scales (M = Medium; OOM = Out-of-Memory).}
\label{tab:niah-two}
\begin{tabular}{@{}l@{\hspace{6pt}}l*{8}{S}@{}}
\toprule
\multirow{2}{*}{} & \multirow{2}{*}{Model} &
\multicolumn{8}{c}{NIAH-2} \\
\cmidrule(lr){3-10}
& &
\multicolumn{1}{c}{1k} &
\multicolumn{1}{c}{2k} &
\multicolumn{1}{c}{4k} &
\multicolumn{1}{c}{8k} &
\multicolumn{1}{c}{16k} &
\multicolumn{1}{c}{32k} &
\multicolumn{1}{c}{64k} &
\multicolumn{1}{c}{96k} \\
\midrule
\multirow{2}{*}{\rotatebox[origin=c]{90}{Base}}
& Avey\text{-}B
    & 78.285 & 79.395 & 79.770 & 78.525 & 78.695 & 75.500 & 73.780 & 71.860 \\
& ModernBERT
    & 66.990 & 67.250 & 69.485 & 70.480
    & \multicolumn{1}{c}{--}
    & \multicolumn{1}{c}{--}
    & \multicolumn{1}{c}{--}
    & \multicolumn{1}{c}{--} \\
\midrule
\multirow{1}{*}{\rotatebox[origin=c]{90}{M}}
& NeoBERT
    & 79.605 & 79.520 & 80.070
    & \multicolumn{1}{c}{--}
    & \multicolumn{1}{c}{--}
    & \multicolumn{1}{c}{--}
    & \multicolumn{1}{c}{--}
    & \multicolumn{1}{c}{--} \\
\midrule
\multirow{2}{*}{\rotatebox[origin=c]{90}{Large}}
& Avey\text{-}B
    & 78.935 & 79.480 & 79.990 & 78.705
    & 79.070 & 78.310 & 74.470 & 74.535 \\
& ModernBERT
    & 66.960 & 67.285 & 68.070
    & \multicolumn{1}{c}{\textsc{oom}}
    & \multicolumn{1}{c}{--}
    & \multicolumn{1}{c}{--}
    & \multicolumn{1}{c}{--}
    & \multicolumn{1}{c}{--} \\
\bottomrule
\end{tabular}
\vspace{-0.75em}
\end{table*}

In this section, we evaluate the long-context capabilities of Avey\text{-}B, ModernBERT, and NeoBERT using a synthetic needle-in-a-haystack (NIAH) benchmark formulated as an extractive question–answering (QA) task augmented with position-sensitive reasoning. Each instance consists of a passage of a specified length (e.g., 96k tokens) filled with random distractor tokens and {\em one} or {\em more} key–value pairs, where the value constitutes the ``needle.'' The query contains only the key, and the model must extract the corresponding value from the passage.

In the single-needle setting, the task measures a model’s ability to semantically locate the correct span within an extremely long sequence. In the multi-needle setting, all key–value pairs share the same key, and the query explicitly requests the $n^{\text{th}}$ occurrence. This removes any semantic disambiguation and introduces a position-sensitive reasoning requirement, where the model must identify all candidate spans and reason over their order to select the correct needle.

The evaluation set comprises 40\% single-needle (pure long-context QA) and 60\% two-needle (position-sensitive reasoning) examples, thereby jointly assessing semantic retrieval and positional reasoning under extreme sequence lengths, with the latter emphasized by design. We further construct two variants of the benchmark with different classes of randomly generated needles, namely, alphanumeric (NIAH-1) and numeric (NIAH-2), following the setup introduced in Avey~\citep{hammoud2025don}.

Tables~\ref{tab:niah-one} and~\ref{tab:niah-two} illustrate results for Avey\text{-}B {\em base} and {\em large}, ModernBERT {\em base} and {\em large}, and NeoBERT {\em medium} (the only publicly available size) on NIAH-1 and NIAH-2 across sequence lengths from 1k to 96k tokens. For ModernBERT and NeoBERT, we show results only up to their respective trained context windows (i.e., 8k and 4k tokens). In contrast, Avey\text{-}B imposes no fixed maximum sequence length and generalizes seamlessly beyond its trained 2{,}048-token context window, enabling evaluation up to 96k tokens.


On NIAH\text{-}1 (Table~\ref{tab:niah-one}), Avey\text{-}B exhibits strong robustness and scalability relative to ModernBERT and NeoBERT. Both Avey\text{-}B {\em base} and {\em large} maintain near-constant accuracy from 1k to 96k tokens, with only a modest $3$--$4$ point decrease over a $96{\times}$ increase in sequence length. This stability demonstrates that Avey\text{-}B effectively resolves long-range dependencies and generalizes far beyond its trained context window. By comparison, ModernBERT and NeoBERT cannot operate beyond their 8k-token and 4k-token trained windows, respectively. NeoBERT matches Avey\text{-}B at very short contexts (1--2k) but drops by roughly 5 points at 4k, while ModernBERT lags behind Avey\text{-}B by 10--12 points even at short sequence lengths\footnote{For ModernBERT versus NeoBERT, we hypothesize that the consistently lower scores of ModernBERT stem from its local--global alternating attention pattern, compared to the full bidirectional self-attention used in NeoBERT.}. Moreover, ModernBERT {\em large} fails at 8k tokens due to out-of-memory issues even on NVIDIA B200 GPUs using the smallest feasible batch size.


Results on NIAH\text{-}2 (Table~\ref{tab:niah-two}) mirror the trends observed on NIAH\text{-}1 and further reinforce Avey\text{-}B’s long-context robustness. Both Avey\text{-}B {\em base} and {\em large} maintain strong performance across extreme sequence lengths, with Avey\text{-}B {\em base} decreasing from 78.3 at 1k to 71.9 at 96k tokens, and Avey\text{-}B {\em large} remaining similarly stable (78.9~$\rightarrow$~74.5). ModernBERT, by contrast, trails Avey\text{-}B by 9--12 points at short contexts and fails at 8k tokens due to memory limitations, preventing any long-context evaluation. NeoBERT remains competitive at very short lengths (1--4k) but, like ModernBERT, cannot operate beyond its 4k-token window, and therefore cannot be evaluated in the long-context regime.

In summary, Avey\text{-}B is the \emph{only} model capable of sustaining high accuracy up to 96k tokens on this question–answering, reasoning-intensive benchmark, despite being trained with a context window of only 2{,}048 tokens. These results demonstrate that Avey\text{-}B not only generalizes far beyond its trained context width, but also preserves long-range reasoning fidelity in regimes where existing Transformer-based encoders either fail to extrapolate or collapse entirely.

\end{document}